\def\eqref#1{equation~\ref{#1}}
\def\1{\bm{1}}
\def\mA{{\bm{A}}}
\def\mB{{\bm{B}}}
\def\mD{{\bm{D}}}
\def\mE{{\bm{E}}}
\def\mF{{\bm{F}}}
\def\mG{{\bm{G}}}
\def\mH{{\bm{H}}}
\def\mI{{\bm{I}}}
\def\mL{{\bm{L}}}
\def\mM{{\bm{M}}}
\def\mN{{\bm{N}}}
\def\mO{{\bm{O}}}
\def\mP{{\bm{P}}}
\def\mQ{{\bm{Q}}}
\def\mR{{\bm{R}}}
\def\mS{{\bm{S}}}
\def\mT{{\bm{T}}}
\def\mU{{\bm{U}}}
\def\mX{{\bm{X}}}
\def\mY{{\bm{Y}}}
\def\mZ{{\bm{Z}}}
\DeclareMathAlphabet{\mathsfit}{\encodingdefault}{\sfdefault}{m}{sl}
\SetMathAlphabet{\mathsfit}{bold}{\encodingdefault}{\sfdefault}{bx}{n}
\def\tS{{\tens{S}}}
\def\sF{{\mathbb{F}}}
\def\sL{{\mathbb{L}}}
\def\sS{{\mathbb{S}}}
\DeclareMathOperator*{\argmax}{arg\,max}
\DeclareMathOperator*{\argmin}{arg\,min}
\DeclareMathOperator{\sign}{sign}
\titleformat{\section}{\large\scshape}{\thesection}{1em}{}
\titleformat{\subsection}{\normalsize\scshape}{\thesubsection}{1em}{}
\titleformat{\subsubsection}{\normalsize\scshape}{\thesubsubsection}{1em}{}
\theoremstyle{plain}
\newtheorem{theorem}{Theorem}
\theoremstyle{definition}
\newtheorem{definition}[theorem]{Definition}
\theoremstyle{remark}
\title{Out-of-distribution Generalization for Total\\ Variation based Invariant Risk Minimization}
\author{Yuanchao Wang\textsuperscript{\rm 1,\rm 2}, Zhao-Rong Lai\textsuperscript{\rm 1}\thanks{Corresponding author.}\ \ , Tianqi Zhong\textsuperscript{\rm 3} \\
	\textsuperscript{\rm 1}College of Cyber Security, Jinan University\\
	\textsuperscript{\rm 2}Pratt School of Engineering, Duke University\\
	\textsuperscript{\rm 3}Sino-French Engineer School, Beihang University\\
	\texttt{yuanchao.wang@duke.edu, laizhr@jnu.edu.cn, zhongtianqi@buaa.edu.cn} 
}
\begin{document}

\def \bbE {\mathbb E}
\def \bbI {\mathbb I}
\def \bbR {\mathbb R}
\def \bbN {\mathbb N}
\def \bbV {\mathbb V}
\def \bbZ {\mathbb Z}

\def \mA {\mathcal{A}}
\def \mB {\mathcal{B}}
\def \mD {\mathcal{D}}
\def \mE {\mathcal{E}}
\def \mF {\mathcal{F}}
\def \mG {\mathcal{G}}
\def \mI {\mathcal{I}}
\def \mH {\mathcal{H}}
\def \mL {\mathcal{L}}
\def \mM {\mathcal{M}}
\def \mN {\mathcal{N}}
\def \mO {\mathcal{O}}
\def \mP {\mathcal{P}}
\def \mQ {\mathcal{Q}}
\def \mR {\mathcal{R}}
\def \mS {\mathcal{S}}
\def \mT {\mathcal{T}}
\def \mU {\mathcal{U}}
\def \mX {\mathcal{X}}
\def \mY {\mathcal{Y}}
\def \mZ {\mathcal{Z}}

\def \ff {\mathfrak{f}}

\def \sF {\mathscr{F}}
\def \sL {\mathscr{L}}
\def \sS {\mathscr{S}}

\def \ba {\bm{a}}
\def \bb {\bm{b}}
\def \bc {\bm{c}}
\def \bd {\bm{d}}
\def \bh {\bm{h}}
\def \bp {\bm{p}}
\def \bq {\bm{q}}
\def \bx {\bm{x}}
\def \by {\bm{y}}
\def \bz {\bm{z}}
\def \bw {\bm{w}}
\def \bu {\bm{u}}
\def \bv {\bm{v}}
\def \br {\bm{r}}
\def \bs {\bm{s}}
\def \bR {\bm{R}}
\def \bS {\bm{S}}
\def \bI {\bm{I}}
\def \bA {\bm{A}}
\def \bB {\bm{B}}
\def \bC {\bm{C}}
\def \bD {\bm{D}}
\def \bE {\bm{E}}
\def \bF {\bm{F}}
\def \bG {\bm{G}}
\def \bH {\bm{H}}
\def \bP {\bm{P}}
\def \bQ {\bm{Q}}
\def \bR {\bm{R}}
\def \bU {\bm{U}}
\def \bV {\bm{V}}
\def \bW {\bm{W}}
\def \bX {\bm{X}}
\def \bY {\bm{Y}}

\def \bone {\mathbf{1}}

\def \blambda {\bm{\lambda}}

\def \fp {\mathfrak{p}}
\def \frs {\mathfrak{s}}

\def \fE {\mathfrak{E}}

\def \bbRd {\bbR^{d}}
\def \bbRn {\bbR^{n}}
\def \bbRm {\bbR^{m}}
\def \bbRdmY {\bbR^{d_{\mY}}}
\def \bbRdmH {\bbR^{d_{\mH}}}
\def \bbRE {\bbR^{E}}
\def \bbRN {\bbR^{N}}
\def \bbRM {\bbR^{M}}
\def \bbRT {\bbR^{T}}
\def \bbNM {\bbN_M}
\def \bbNk {\bbN_k}
\def \bbNn {\bbN_n}
\def \bbNm {\bbN_m}

\def \tS {\tilde{S}}
\def \tbQ {\tilde{\bQ}}
\def \tbq {\widetilde{\bq}}
\def \tbI {\widetilde{\bI}}

\def \rB {\mathrm{B}}
\def \st {\text{s.\ t.}}
\def \and {\text{and}}
\def \tr {\mathrm{tr}}
\def \prox {\mathrm{prox}}
\def \env {\mathrm{env}}
\def \supp {\mathrm{supp}}
\def \sign {\mathrm{sign}}
\def \trun {\mathrm{trun}}
\def \dist {\mathrm{dist}}
\def \div {\mathrm{div}}
\def \diag {\mathrm{diag}}
\def \Fix {\mathrm{Fix}}
\def \VaR {\mathrm{VaR}}
\def \CVaR {\mathrm{CVaR}}
\def \gra {\mathrm{gra}\hspace{2pt}}
\def \dom {\mathrm{dom}\hspace{2pt}}
\def \crit {\mathrm{crit}\hspace{2pt}}

\def \Argmax {\mathop{\rm Arg\,max}}
\def \Argmin {\mathop{\rm Arg\,min}}

\def \argmax {\mathop{\rm arg\,max}}
\def \argmin {\mathop{\rm arg\,min}}

\def \leqs {\leqslant}
\def \geqs {\geqslant}

\newcommand{\ud}{\,\mathrm{d}}
\def \dmu {\ud\mu}
\def \dnu {\ud\nu}
\def \drho {\ud\rho}

\maketitle

\begin{abstract}
Invariant risk minimization is an important general machine learning framework that has recently been interpreted as a total variation model (IRM-TV). However, how to improve out-of-distribution (OOD) generalization in the IRM-TV setting remains unsolved. In this paper, we extend IRM-TV to a Lagrangian multiplier model named OOD-TV-IRM. We find that the autonomous TV penalty hyperparameter is exactly the Lagrangian multiplier. Thus OOD-TV-IRM is essentially a primal-dual optimization model, where the primal optimization minimizes the entire invariant risk and the dual optimization strengthens the TV penalty. The objective is to reach a semi-Nash equilibrium where the balance between the training loss and OOD generalization is maintained. We also develop a convergent primal-dual algorithm that facilitates an adversarial learning scheme. Experimental results show that OOD-TV-IRM outperforms IRM-TV in most situations. 
\end{abstract}

\section{Introduction}
Traditional risk minimization methods such as Empirical Risk Minimization (ERM) are widely used in machine learning. 
ERM generally assumes that both training and test data come from the same distribution. Based on this assumption, ERM learns model parameters by minimizing the average loss on the training data. However, the distributions between training and test data or even within the training or test data are often different in practical situations, which affects the model's performance in new environments. Besides, ERM tends to exploit correlations in the training data, even if these correlations do not hold in different environments. This may cause the model to learn some spurious features that are irrelevant to the target, so as to perform poorly in new environments. Because ERM only focuses on the average performance on a given data distribution, the model may overfit to specific training data and thus perform unstably when facing distribution changes or outliers. These problems lead to the poor generalization ability of ERM across different environments \citep{1,2,3}.

The key to solving the above problems is to distinguish between invariant features and spurious features that cause distribution shifts, so that a trained model can be generalized to an unseen domain. This leads to the concept of out-of-distribution (OOD) generalization \citep{4,OODcite1}. One important methodology is the Invariant Risk Minimization (IRM, \citealt{2}) criterion. It aims to extract invariant features across different environments to improve the generalization and robustness of the model \citep{OODcite2,OODcite3}. Specifically, it introduces a gradient norm penalty that measures the optimality of the virtual classifier in each environment. Then it minimizes the risk in all potential environments to ensure that the model can still work effectively when facing distribution drift. In summary, IRM copes with distribution changes by introducing environmental invariance constraints in order to make the trained model more robust in a wider range of application scenarios \citep{IRMG}. There are various variants and improvements of IRM, such as Heterogeneous Risk Minimization (HRM, \citealt{HRM}), Risk Extrapolation (REx, \citealt{5}), SparseIRM \citep{6}, jointly learning with auxiliary information (ZIN, \citealt{3}), and invariant feature learning through independent variables (TIVA, \citealt{7}). Diversifying spurious features \citep{spuriousfeat} can also be an effective approach to improve IRM. These approaches not only provide new IRM settings or new application scenarios, but also improve the robustness and extensibility of IRM. They contribute to the theoretical understanding of OOD generalization. 

A recent work reveals that the mathematical essence of IRM is a total variation (TV) model \citep{8}. TV measures the locally varying nature of a function, which is widely applied to different areas of mathematics and engineering, such as signal processing and image restoration. Its core idea is to eliminate noise by minimizing the TV of a function while preserving sharp discontinuities in the function (e.g., edges in an image \citep{dey2006richardson}). The interpretation of IRM as a TV model not only provides a unified mathematical framework, but also reveals why the IRM approach can work effectively across different environments. The original IRM actually contains a TV term based on the $\ell_2$ norm (TV-$\ell_2$). Compared with TV-$\ell_2$, TV-$\ell_1$ further has the coarea formula that provides a geometric nature of preserving sharp discontinuities. This property is well-suited for the invariant feature extraction of IRM. Hence an IRM-TV-$\ell_1$ framework is proposed and it shows better performance than the IRM-TV-$\ell_2$ framework. When considering the learning risk as part of the full-variance model, IRM-TV-$\ell_1$ performs better in OOD generalization. This finding helps to address distributional drift and improve model robustness in machine learning. 

However, IRM-TV-$\ell_1$ may not achieve complete OOD generalization, due to insufficient diversity in the training environments and the inflexible TV penalty. \citet{8} investigate some additional requirements for IRM-TV-$\ell_1$ to achieve OOD generalization. A key point is to let the penalty parameter vary according to the invariant feature extractor; however, the authors do not specify a tractable implementation. It leaves unsolved how to construct and optimize this framework for broader practical applications, resulting in an urgent need for effective solutions.

To fill this gap, we have found that the TV penalty hyperparameter can be used as a Lagrangian multiplier. We propose to extend the IRM-TV framework to a \textbf{Lagrangian multiplier framework}, named \textbf{OOD-TV-IRM}. It is essentially a primal-dual optimization framework. The primal optimization reduces the entire invariant risk, in order to learn invariant features, while the dual optimization strengthens the TV penalty, in order to provide an adversarial interference with spurious features. The objective is to reach a \textbf{semi-Nash equilibrium} where the balance between the training loss and OOD generalization is maintained. We also develop a \textbf{convergent primal-dual algorithm} that facilitates an \textbf{adversarial learning} scheme.  This work not only extends the theoretical foundation of previous work, but also provides a concrete implementation scheme to improve OOD generalization of machine learning methods across unknown environments.

\section{Preliminaries and Related Works}
\subsection{Invariant Risk Minimization}
In machine learning tasks, we often work with a data set consisting of multiple samples, where each sample includes input and output variables. These samples are typically drawn from various environments, but in many scenarios, the specific environment labels are not explicitly available. As a result, the challenge is to train a model that can generalize well across these different environments, even when the environment information is unknown.

IRM \citep{2} addresses this challenge by structuring the model into two components: a feature extractor and a classifier. The feature extractor is responsible for identifying invariant features from the input data, while the classifier makes predictions based on these features. IRM aims to minimize the invariant risk across different training environments, in order to find a representation that performs consistently well across all environments.

For a given training data set of $n$ samples $\mD:=\{(x_i,y_i)\in \mX\times \mY\}_{i=1}^n$, the empirical risk in a given environment $e$ is computed using a loss function $\mathcal{L}$, which measures the discrepancy between the predicted value and the true output. This risk is expressed as:
\begin{equation}\label{1}
R(w \circ \Phi, e) := \frac{1}{n} \sum_{i=1}^n \mathcal{L}(w \circ \Phi(x_i), y_i, e),
\end{equation}
which represents the mean loss over $n$ samples in environment $e$. IRM simultaneously learns an invariant feature extractor $\Phi$ and a classifier $w$ across multiple training environments, so that the risk is minimized in all environments:
\begin{equation}\label{2}
\min_{w, \Phi} \sum_{e \in \mathcal{E}_{\text{tr}}} R(w \circ \Phi, e),\quad \st \quad w \in \arg\min_{w'} R(w' \circ \Phi, e), \forall e \in \mathcal{E}_{\text{tr}},
\end{equation}
where the classifier $w$ should be the optimal solution for minimizing the risk in each environment $e$. However, it is a challenging bilevel optimization problem. Hence \citet{2} further propose a practical variant of IRM as follows:
\begin{equation}\label{4}
\min_{\Phi} \sum_{e \in \mathcal{E}_{\text{tr}}} \left\{ R(1 \circ \Phi, e) + \lambda \|\nabla_w |_{w=1} R(w \circ \Phi, e)\|_2^2 \right\}.
\end{equation}
In this version, the classifier $w$ is set to a scalar value of 1, and a gradient norm penalty term is introduced to convert the objective into a single-level optimization problem.

ZIN \citep{3} further extends the IRM framework by introducing auxiliary information $z_i \in \mathcal{Z}$ to simultaneously learn environment partitioning and invariant feature representation. It assumes that the space of training environments $\mathcal{E}_{\text{tr}}$ is the convex hull of $E$ linearly independent fundamental environments. By learning a mapping $\rho : \mathcal{Z} \to \Delta_E$ that provides weights for different environments, ZIN optimizes the following objective:
\begin{equation}\label{5}
\min_{w, \Phi} \left\{ R(w \circ \Phi, \frac{1}{E}1_{(E)}) \cdot 1_{(E)} + \lambda \max_{\rho} \|\nabla_w R(w \circ \Phi, \rho)\|_2^2 \right\}.
\end{equation}

By this means, ZIN can effectively learn invariant features even without explicit environment partition information.

\subsection{Total Variation}
TV is a mathematical operator widely recognized for its ability to measure the global variability of a function. It has been extensively applied in various fields such as optimal control, data transmission, and signal processing. The central concept behind TV is to quantify the overall amount of change in a function across its domain. It is able to preserve sharp discontinuities (such as edges in images) while removing noise, making it a useful tool in image restoration and signal denoising. This property is further elucidated through the coarea formula \citep{Chen2006}. For a function $f \in L^1(\Omega)$, where $\Omega$ is an open subset of $\mathbb{R}^d$, 
\begin{equation}\label{7}
\int_{\Omega} |\nabla f| := \int_{-\infty}^{\infty} \int_{f^{-1}(\gamma)} ds \, d\gamma,
\end{equation}
where $f^{-1}(\gamma)$ represents the level set of $f$ at the value $\gamma$. This formulation shows that TV integrates over all the contours of the function, reinforcing its capability in capturing piecewise-constant features. \citet{mumford1985boundary} and \citet{Rudin1992} propose the following TV-$\ell_2$ and TV-$\ell_1$ models, respectively.
\begin{align}\label{eqn:TVL1sigrecover}
&\inf_{f \in L^2(\Omega)} \left\{ \int_{\Omega} |\nabla f|^2 + \lambda \int_{\Omega} (f - \tilde{f})^2 \, dx \right\},\quad \inf_{f\in  L^2(\Omega)} \left\{ \int_\Omega |\nabla f|+ \lambda\int_\Omega (f-\tilde{f})^2\ud x \right\},
\end{align}
where $\tilde{f}\in  L^2(\Omega)$ is the ground-truth signal and $\lambda$ is a hyperparameter that affects accuracy. These models aim to preserve target features in the approximation $f$ and leave noise in the residual $(f-\tilde{f})$. In general, TV-$\ell_1$ can better preserve useful features and sharp discontinuities in the approximation $f$.

\subsection{Invariant Risk Minimization based on Total Variation}
It can be seen that the penalties of (\ref{4}) and (\ref{5}) are similar to a TV-$\ell_2$ term, which has been verified by \citet{8} under some mild conditions, such as measurability of the involved functions, non-correlation between the feature extractor and the environment variable, representability of $w$ by $e$, etc. In this sense, (\ref{4}) and (\ref{5}) are actually IRM-TV-$\ell_2$ and Minimax-TV-$\ell_2$ models. Considering the superiority of TV-$\ell_1$ over TV-$\ell_2$ in sharp feature preservation, \citet{8} further propose the following IRM-TV-$\ell_1$ (\ref{eqn:IRMv1wTVL1}) and Minimax-TV-$\ell_1$ (\ref{eqn:MinimaxTVL1}) models:
\begin{align}
	\label{eqn:IRMv1wTVL1}
	\min_{\Phi}\ &\left\{ \mathbb{E}_{w} [R(w \circ \Phi)] + \lambda (\mathbb{E}_{w} [|\nabla_w R(w \circ \Phi)| ])^2 \right\}, \\
	\label{eqn:MinimaxTVL1}
	\min_{\Phi}\ &\left\{ \mathbb{E}_{w \leftarrow \frac{1_{(E)}}{E}} [ R(w \circ \Phi)] + \lambda \max_{\rho} (\mathbb{E}_{w \leftarrow \rho} [|\nabla_w R(w \circ \Phi)| ])^2 \right\},
\end{align}
where $\bbE_{w\leftarrow \rho}$ denotes the mathematical expectation with respect to (w.r.t.) $w$ whose measure is induced by $\rho$, and $\frac{1_{(E)}}{E}$ denotes the uniform probability measure for the environment $e$. The maximization w.r.t. $\rho $ identifies the worst-case inferred environment causing the greatest variation in the risk function. Hence, Minimax-TV-$\ell_1$ can be seen as the TV-$\ell_1$ version of ZIN. The corresponding coarea formula for IRM-TV-$\ell_1$ or Minimax-TV-$\ell_1$ is
\begin{equation}\label{15}
\int_{\Omega} |\nabla_w R(w \circ \Phi)| \, d\nu = \int_{-\infty}^{\infty} \int_{\{w \in \Omega : R(w \circ \Phi) = \gamma\}} ds \, d\gamma,
\end{equation}
where $ \{w \in \Omega : R(w \circ \Phi) = \gamma\} $ denotes the level set, and $ s $ is the Hausdorff measure in the corresponding dimensions. This formula ensures that IRM-TV-$\ell_1$ or Minimax-TV-$\ell_1$ preserves essential structural features while maintaining robustness across different environments. In the rest of this paper, we abbreviate IRM-TV-$\ell_1$ as IRM-TV if not specified otherwise.

\section{Methodology}

\subsection{Autonomous Total Variation Penalty and Lagrangian Multiplier}
OOD generalization represents the generalization ability of a trained model to an unseen domain. From the perspective of IRM-TV, it can be represented by \citep{4,2,3,8}
\begin{align}
\label{eqn:OODwfrome}
 &\min_{\Phi}\max_{w} R(w\circ \Phi). 
\end{align}
In general, IRM-TV cannot achieve OOD generalization without additional conditions, due to insufficient diversity in the training environments and the inflexible TV penalty. However, \citet{8} indicate that the penalty hyperparameter $\lambda_{\Phi}$ of IRM-TV should be variable according to the invariant feature extractor $\Phi $:
\begin{align}
	\label{eqn:IRMv1wTVL1phi}
	\min_{\Phi}\ &\left\{ \mathbb{E}_{w} [R(w \circ \Phi)] + \lambda_{\Phi} (\mathbb{E}_{w} [|\nabla_w R(w \circ \Phi)| ])^2 \right\}, \\
	\label{eqn:MinimaxTVL1phi}
	\min_{\Phi}\ &\left\{ \mathbb{E}_{w \leftarrow \frac{1_{(E)}}{E}} [ R(w \circ \Phi)] + \lambda_{\Phi} \max_{\rho} (\mathbb{E}_{w \leftarrow \rho} [|\nabla_w R(w \circ \Phi)| ])^2 \right\}.
\end{align}
The main reason is that a variable $\lambda_{\Phi}$ can fill the gap between the objective value of (\ref{eqn:IRMv1wTVL1phi}) or (\ref{eqn:MinimaxTVL1phi}) and the objective value of (\ref{eqn:OODwfrome}). \citet{8} further indicate that this variable $\lambda_{\Phi}$ exists in general situations. However, they do not specify any tractable and concrete implementation of $\lambda_{\Phi}$. 

To fill this gap, we find that $\lambda_{\Phi}$ actually serves as a Lagrangian multiplier in (\ref{eqn:IRMv1wTVL1phi}) and (\ref{eqn:MinimaxTVL1phi}). Hence we use it as an autonomous parameter, which is also taken into the training scheme. Specifically, the parameters for the invariant feature extractor, also denoted by $\Phi $, are directly taken as inputs for $\lambda$. Next, $\lambda$ can be parameterized by another set of parameters $\Psi$. By this means, $\lambda$ can be represented as a function of both $\Psi$ and $\Phi $: $\lambda(\Psi,\Phi)$. Then $\lambda$ not only depends on $\Phi$, but also adjusts its strength through $\Psi$. Now it can be deployed in a primal-dual optimization model, as illustrated in the next section.

\subsection{Primal-dual Optimization and Semi-Nash Equilibrium}
\label{sec:primdual}
Replacing $\lambda_{\Phi}$ by $\lambda(\Psi,\Phi)$ in (\ref{eqn:IRMv1wTVL1phi}) and (\ref{eqn:MinimaxTVL1phi}), we obtain the corresponding Lagrangian functions:
\begin{align}
	\label{eqn:IRMv1wTVL1obj}
	g(\Psi,\Phi)&:=\mathbb{E}_{w} [R(w \circ \Phi)] +  \lambda(\Psi,\Phi) (\mathbb{E}_{w} [|\nabla_w R(w \circ \Phi)| ])^2 , \\
	\label{eqn:MinimaxTVL1obj}
	h(\rho,\Psi,\Phi)&:=\mathbb{E}_{w \leftarrow \frac{1_{(E)}}{E}} [ R(w \circ \Phi)] + \lambda(\Psi,\Phi)(\mathbb{E}_{w \leftarrow \rho} [|\nabla_w R(w \circ \Phi)| ])^2.
\end{align}
Applying the OOD generalization criterion (\ref{eqn:OODwfrome}) in the literature of IRM, we adopt a minimax scheme to optimize $\Phi$ and $\Psi$.
\begin{itemize}
	\item \textbf{Outer Minimization (Primal):} the primal variable $\Phi$ is trained to minimize the entire invariant risk, in order to capture invariant features across different environments.
	\item \textbf{Inner Maximization (Dual):} the dual variable $\Psi$ (or $(\rho,\Psi)$) is trained to maximize the autonomous TV penalty $\lambda(\Psi,\Phi) (\mathbb{E}_{w} [|\nabla_w R(w \circ \Phi)| ])^2$ (or $\lambda(\Psi,\Phi)(\mathbb{E}_{w \leftarrow \rho} [|\nabla_w R(w \circ \Phi)| ])^2$), which confronts the most adverse scenarios with spurious features. 
\end{itemize}
They lead to the following proposed OOD-TV-IRM (\ref{eqn:IRMv1wTVL1ood}) and OOD-TV-Minimax (\ref{eqn:MinimaxTVL1ood}) models:
\begin{align}
	\label{eqn:IRMv1wTVL1ood}
\min_{\Phi}\max_{\Psi}  g(\Psi,\Phi):=	\min_{\Phi}\ &\left\{ \mathbb{E}_{w} [R(w \circ \Phi)] + \max_{\Psi} \left[\lambda(\Psi,\Phi) (\mathbb{E}_{w} [|\nabla_w R(w \circ \Phi)| ])^2\right] \right\}, \\
	\label{eqn:MinimaxTVL1ood}
\min_{\Phi}\max_{\rho,\Psi}h(\rho,\Psi,\Phi):=\min_{\Phi}\ &\left\{ \mathbb{E}_{w \leftarrow \frac{1_{(E)}}{E}} [ R(w \circ \Phi)] +  \max_{\rho,\Psi} \left[\lambda(\Psi,\Phi)(\mathbb{E}_{w \leftarrow \rho} [|\nabla_w R(w \circ \Phi)| ])^2\right] \right\}.
\end{align}
They are essentially primal-dual optimization models. To further understand their functions, we introduce the concept of semi-Nash equilibrium in the context of this paper, without loss of generality.

\begin{definition}
\label{def:seminash}
A semi-Nash equilibrium $(\Psi^*,\Phi^*)$ of a Lagrangian function $g(\Psi,\Phi)$ satisfies the following conditions:
\vspace{-5pt}
\begin{enumerate}
	\item $g(\Psi^*,\Phi^*)\geqs g(\Psi,\Phi^*)$ for any $\Psi$ in the parameter space.
	\item $g(\Psi^*,\Phi^*)=g(\Psi_{max}(\Phi^*),\Phi^*)\leqs g(\Psi_{max}(\Phi),\Phi)$ for any $\Phi$ in the parameter space, where $\Psi_{max}(\Phi)\in \argmax_{\Psi} \left[\lambda(\Psi,\Phi) (\mathbb{E}_{w} [|\nabla_w R(w \circ \Phi)| ])^2\right]$. In other words,$\Psi_{max}(\Phi)$ is a solution to the dual optimization with $\Phi$ fixed.
\end{enumerate}
\vspace{-5pt}
A semi-Nash equilibrium $(\rho^*,\Psi^*,\Phi^*)$ of a Lagrangian function $h(\rho,\Psi,\Phi)$ follows the same definition with the unified dual variable $(\rho,\Psi)$.
\end{definition}
\noindent\textbf{Remark:} in the context of game theory, OOD-TV-IRM (\ref{eqn:IRMv1wTVL1ood}) (or OOD-TV-Minimax Eq. \ref{eqn:MinimaxTVL1ood}) can be seen as a two-player zero-sum game, where $g(\Psi,\Phi)$ and $-g(\Psi,\Phi)$ are the gains for players $\Psi$ and $\Phi$, respectively. Item 1 in Definition \ref{def:seminash} indicates that $\Psi^*$ is the best strategy for dual optimization given $\Phi^*$ (which is independent of $\Psi$). On the other hand, Item 2 indicates that $\Phi^*$ is the conditional best strategy for primal optimization only when $\Psi_{max}(\Phi)$ is optimized over each $\Phi$. In this sense, $(\Psi^*,\Phi^*)$ is only a semi-Nash equilibrium instead of a full Nash equilibrium. It accords with our task where $\Phi$ is the feature extractor and primary variable that should confront the most adverse scenarios, while $\Psi$ is the TV penalty parameter and secondary variable that provides OOD generalization depending on $\Phi$. 

\begin{theorem}
\label{thm:seminash}
Assume that $R(w \circ \Phi)$ is continuous w.r.t. $(w,\Phi)$ and differentiable w.r.t. $w$, $\lambda(\Psi,\Phi)$ is continuous w.r.t. $(\Psi,\Phi)$, and the feasible set of parameters $(\rho,\Psi,\Phi)$ is bounded and closed. Then any solution to OOD-TV-IRM (\ref{eqn:IRMv1wTVL1ood}) or OOD-TV-Minimax (\ref{eqn:MinimaxTVL1ood}) is a semi-Nash equilibrium.
\end{theorem}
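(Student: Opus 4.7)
The plan is to show that both conditions of Definition \ref{def:seminash} are essentially tautological consequences of the definition of a min-max solution, with the continuity and compactness assumptions used only to guarantee that $\Psi_{max}(\Phi)$ is well-defined for every feasible $\Phi$ (otherwise Condition 2 would not even be meaningful to state).

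First I would verify that $g(\Psi,\Phi)$ is continuous in $(\Psi,\Phi)$ on the compact feasible set. The term $\mathbb{E}_{w}[R(w\circ\Phi)]$ is continuous in $\Phi$ by continuity of $R$ in $(w,\Phi)$ and boundedness of $R$ on the compact domain (dominated convergence). The factor $\lambda(\Psi,\Phi)$ is continuous by assumption, and $(\mathbb{E}_w[|\nabla_w R(w\circ\Phi)|])^2$ is continuous in $\Phi$ by the same dominated-convergence argument applied to $|\nabla_w R|$, which is bounded on the compact set. Weierstrass's theorem then guarantees that $\Psi_{max}(\Phi)\in \argmax_{\Psi}[\lambda(\Psi,\Phi)(\mathbb{E}_{w}[|\nabla_w R(w\circ\Phi)|])^2]$ exists for every $\Phi$, and Berge's maximum theorem yields continuity of $G(\Phi):=\max_\Psi g(\Psi,\Phi)$ so that $\min_\Phi G(\Phi)$ is attained.

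Next I would unpack what it means for $(\Psi^*,\Phi^*)$ to solve OOD-TV-IRM (\ref{eqn:IRMv1wTVL1ood}): $\Phi^*\in \argmin_\Phi G(\Phi)$, and for this $\Phi^*$, $\Psi^*\in \argmax_{\Psi} g(\Psi,\Phi^*)$. Condition 1 is then immediate, since by definition of $\Psi^*$, $g(\Psi^*,\Phi^*)\geqs g(\Psi,\Phi^*)$ for every feasible $\Psi$. For Condition 2, observe that both $\Psi^*$ and any $\Psi_{max}(\Phi^*)$ lie in $\argmax_\Psi g(\Psi,\Phi^*)$, so $g(\Psi^*,\Phi^*)=g(\Psi_{max}(\Phi^*),\Phi^*)=G(\Phi^*)$; here one must note that the maximizers of $g(\cdot,\Phi^*)$ coincide with the maximizers of $\lambda(\cdot,\Phi^*)(\mathbb{E}_w[|\nabla_w R(w\circ\Phi^*)|])^2$ because the remaining summand $\mathbb{E}_w[R(w\circ\Phi^*)]$ does not depend on $\Psi$. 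Finally, since $\Phi^*$ minimizes $G$, one has $G(\Phi^*)\leqs G(\Phi)=g(\Psi_{max}(\Phi),\Phi)$ for all feasible $\Phi$, which is exactly Condition 2.

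The case of OOD-TV-Minimax (\ref{eqn:MinimaxTVL1ood}) is handled by the same argument with the unified dual variable $(\rho,\Psi)$ in place of $\Psi$; the compactness of the joint feasible set and continuity of $h(\rho,\Psi,\Phi)$ again ensure that $(\rho_{max}(\Phi),\Psi_{max}(\Phi))$ exists and Berge's theorem applies. The only genuine subtlety, and the step I expect to require the most care, is the separation argument used in Condition 2: one needs the $\Phi$-dependent part $\mathbb{E}_w[R(w\circ\Phi)]$ to drop out when comparing maximizers, which relies on the fact that the $\Psi$ (or $(\rho,\Psi)$) variable enters $g$ (resp.\ $h$) only through the penalty factor. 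Beyond this observation, the theorem is just a translation of the definition of a min-max solution into the two conditions defining a semi-Nash equilibrium.
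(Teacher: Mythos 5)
Your proposal is correct and follows essentially the same route as the paper: establish existence of the inner maximizer $\Psi_{max}(\Phi)$ and continuity of the value function $G(\Phi)$ (resp.\ $H(\Phi)$) on the compact feasible set, apply Weierstrass to the outer minimization, and then read off the two conditions of Definition~\ref{def:seminash} directly from the min--max structure, using the observation that $\mathbb{E}_{w}[R(w\circ\Phi)]$ is independent of the dual variable. The only difference is cosmetic: you invoke Berge's maximum theorem for continuity of the value function, whereas the paper proves this by a direct $\epsilon$--$\delta$ argument for $\Lambda(\Phi):=\lambda(\Psi_{max}(\Phi),\Phi)$ and handles the $\rho$-dependence of $h$ by an explicit linearity-plus-product-continuity computation.
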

\vspace{-10pt}
\begin{proof}
The proof is given in Appendix \ref{proof:seminash}.
\end{proof}
\vspace{-10pt}
Most of the widely-used loss functions and deep neural networks satisfy the conditions of Theorem \ref{thm:seminash}, such as the squared loss, cross-entropy loss, multilayer perceptron, convolutional neural networks, etc. In practical computation, a bounded and closed feasible parameter set is also satisfied due to the maximum float point value. Hence Theorem \ref{thm:seminash} is applicable to general situations. It indicates that OOD-TV-IRM and OOD-TV-Minimax balance between the training loss and OOD generalization at a semi-Nash equilibrium.

\subsection{Primal-dual Algorithm and Adversarial Learning}
\label{sec:advlearn}
We develop a primal-dual algorithm to solve OOD-TV-IRM (\ref{eqn:IRMv1wTVL1ood}) or OOD-TV-Minimax (\ref{eqn:MinimaxTVL1ood}). To do this, we further assume that $R(w \circ \Phi)$ and $\lambda(\Psi,\Phi)$ are differentiable w.r.t. their corresponding arguments $w$, $\Phi$, and $\Psi$. As indicated in \citep{8}, $|\nabla_w R(w \circ \Phi)|$ in (\ref{eqn:IRMv1wTVL1ood}) or (\ref{eqn:MinimaxTVL1ood}) is non-differentiable w.r.t. $\Phi$. Hence we adopt the subgradient descent method to update these parameters, as illustrated in Appendix \ref{proof:primdualalgo}. Then the primal and dual updates for (\ref{eqn:IRMv1wTVL1ood}) or (\ref{eqn:MinimaxTVL1ood}) are:
\begin{align}
\label{eqn:IRMTVL1update}
&\begin{cases}
\Phi^{(k+1)}=\Phi^{(k)}-\eta_1^{(k)} \partial_{\Phi} g(\Psi^{(k)},\Phi^{(k)}),  \\
\Psi^{(k+1)}=\Psi^{(k)}+\eta_2^{(k)} \nabla_{\Psi} g(\Psi^{(k)},\Phi^{(k+1)});
\end{cases}\\
\label{eqn:MinimaxTVL1update2}
&\begin{cases}
\Phi^{(k+1)}=\Phi^{(k)}-\eta_1^{(k)} \partial_{\Phi} h(\rho^{(k)},\Psi^{(k)},\Phi^{(k)}),  \\
(\rho^{(k+1)},\Psi^{(k+1)})=(\rho^{(k)},\Psi^{(k)})+\eta_2^{(k)} \nabla_{(\rho,\Psi)} h(\rho^{(k)},\Psi^{(k)},\Phi^{(k+1)}),
\end{cases}
\end{align}
where $\eta_1^{(k)}$,$\eta_2^{(k)}>0$ are the learning rates for the primal and dual updates, respectively. The computation of the above (sub)gradients is illustrated in Appendix \ref{proof:primdualalgo}. In general machine learning scenarios, the smoothness and convexity of the Lagrangian functions $g$ and $h$ are usually unknown, thus $\eta_1^{(k)}$ and $\eta_2^{(k)}$ should be set according to different specific tasks. In addition, we provide a convergent scheme as follows.

\begin{theorem}
\label{thm:primdualalgo}
The same assumptions in Theorem \ref{thm:seminash} are used. By setting
\begin{align}
\label{eqn:primdualalgog}
&\eta_1^{(k)}{:=}\begin{cases}
\frac{1}{k^p \|\partial_{\Phi} g(\Psi^{(k)},\Phi^{(k)})\|_2 }, &\text{if}\quad\partial_{\Phi} g(\Psi^{(k)},\Phi^{(k)})\ne 0; \\ 0, &\text{if}\quad\partial_{\Phi} g(\Psi^{(k)},\Phi^{(k)})= 0. 
\end{cases} \nonumber\\
&\eta_2^{(k)}{:=}\begin{cases}
\frac{1}{k^p \|\nabla_{\Psi} g(\Psi^{(k)},\Phi^{(k+1)})\|_2 }, &\text{if}\quad\nabla_{\Psi} g(\Psi^{(k)},\Phi^{(k+1)})\ne 0; \\ 0, &\text{if}\quad\nabla_{\Psi} g(\Psi^{(k)},\Phi^{(k+1)})= 0. 
\end{cases} \\
\label{eqn:primdualalgoh}
\text{or}\quad&\eta_1^{(k)}{:=}\begin{cases}
\frac{1}{k^p \|\partial_{\Phi} h(\rho^{(k)},\Psi^{(k)},\Phi^{(k)})\|_2 }, &\text{if}\quad\partial_{\Phi} h(\rho^{(k)},\Psi^{(k)},\Phi^{(k)})\ne 0; \\ 0, &\text{if}\quad\partial_{\Phi} h(\rho^{(k)},\Psi^{(k)},\Phi^{(k)})= 0. 
\end{cases} \nonumber\\
&\eta_2^{(k)}{:=}\begin{cases}
\frac{1}{k^p \|\nabla_{(\rho,\Psi)} h(\rho^{(k)},\Psi^{(k)},\Phi^{(k+1)})\|_2 }, &\text{if}\quad\nabla_{(\rho,\Psi)} h(\rho^{(k)},\Psi^{(k)},\Phi^{(k+1)})\ne 0; \\ 0, &\text{if}\quad\nabla_{(\rho,\Psi)} h(\rho^{(k)},\Psi^{(k)},\Phi^{(k+1)})= 0. 
\end{cases}
\end{align}
with an arbitrary $p>1$, the primal-dual algorithm (\ref{eqn:IRMTVL1update}) or (\ref{eqn:MinimaxTVL1update2}) achieves convergent sequences $\{ (\Psi^{(k)},\Phi^{(k)})\}_{k=1}^{\infty}$ or $\{ (\rho^{(k)},\Psi^{(k)},\Phi^{(k)})\}_{k=1}^{\infty}$, respectively. Moreover, its computational complexity is $\mO(((p-1)\epsilon)^{-\frac{1}{p-1}})$ to achieve a convergence tolerance of $\epsilon>0$.
\end{theorem}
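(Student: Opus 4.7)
The plan is to exploit the normalization built into the step-size rules (\ref{eqn:primdualalgog}) and (\ref{eqn:primdualalgoh}), which was engineered precisely so that each primal or dual update has a uniformly bounded magnitude of $1/k^p$, independent of the actual (sub)gradient norms. First I would observe that when the (sub)gradient is nonzero,
\[
\|\Phi^{(k+1)}-\Phi^{(k)}\|_2 = \eta_1^{(k)} \|\partial_{\Phi} g(\Psi^{(k)},\Phi^{(k)})\|_2 = \frac{1}{k^p},
\]
while when the subgradient vanishes the step is zero; hence in all cases $\|\Phi^{(k+1)}-\Phi^{(k)}\|_2 \leqs 1/k^p$. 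Identical bounds hold for $\|\Psi^{(k+1)}-\Psi^{(k)}\|_2$ under (\ref{eqn:IRMTVL1update}) and for $\|(\rho^{(k+1)},\Psi^{(k+1)})-(\rho^{(k)},\Psi^{(k)})\|_2$ under (\ref{eqn:MinimaxTVL1update2}). The continuity and differentiability hypotheses inherited from Theorem \ref{thm:seminash}, together with the boundedness of the feasible set, guarantee that the subgradients appearing here are well-defined (as detailed in Appendix \ref{proof:primdualalgo}), so the step-size formulae are meaningful at every iteration.

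Second, I would invoke the convergence of the $p$-series. Since $\sum_{k=1}^{\infty} 1/k^p < \infty$ for $p>1$, the telescoping identity $\Phi^{(m)}-\Phi^{(n)} = \sum_{k=n}^{m-1}(\Phi^{(k+1)}-\Phi^{(k)})$ combined with the triangle inequality yields
\[
\|\Phi^{(m)}-\Phi^{(n)}\|_2 \leqs \sum_{k=n}^{m-1} \frac{1}{k^p} \to 0 \quad \text{as } n,m \to \infty,
\]
so $\{\Phi^{(k)}\}$ is Cauchy in the Euclidean parameter space and therefore converges. The same argument gives convergence of $\{\Psi^{(k)}\}$ for (\ref{eqn:IRMTVL1update}) and of $\{(\rho^{(k)},\Psi^{(k)})\}$ for (\ref{eqn:MinimaxTVL1update2}). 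Closedness of the feasible set ensures the limits remain feasible.

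Third, to obtain the complexity bound I would control the tail by an integral comparison. Denoting the limit by $\Phi^*$,
\[
\|\Phi^* - \Phi^{(N)}\|_2 \leqs \sum_{k=N}^{\infty} \frac{1}{k^p} \leqs \int_{N-1}^{\infty} \frac{\ud x}{x^p} = \frac{(N-1)^{1-p}}{p-1}.
\]
Forcing the right-hand side to be at most $\epsilon$ yields $N = \mO(((p-1)\epsilon)^{-1/(p-1)})$, matching the stated rate, and the same bound transfers verbatim to the dual sequences.

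The main obstacle is really one of clean bookkeeping rather than deep analysis: the piecewise definition of $\eta_1^{(k)}$ and $\eta_2^{(k)}$ (zero when the corresponding subgradient vanishes) has to be shown not to break the uniform $1/k^p$ ceiling, but in that case the increment is simply zero, which is even more favorable for Cauchyness. A more genuine caveat, which I would flag explicitly, is that the statement only asserts convergence of the iterate sequences; it does not claim the limit is a semi-Nash equilibrium in the sense of Definition \ref{def:seminash}. That stronger conclusion would require an additional upper-semicontinuity argument on the subgradient mapping to pass to the limit in the stationarity conditions, and is deliberately outside the scope of this theorem.
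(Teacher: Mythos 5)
Your argument is correct and follows essentially the same route as the paper's proof: bounding each increment by $1/k^p$ via the normalized step sizes, invoking convergence of the $p$-series to get a Cauchy sequence, and using the integral comparison $\int_{N-1}^{\infty}\zeta^{-p}\ud\zeta$ to derive the $\mO(((p-1)\epsilon)^{-1/(p-1)})$ complexity. Your closing caveat that the theorem asserts only convergence of the iterates, not optimality of the limit, is a fair observation consistent with what the paper actually proves.
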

\vspace{-10pt}
\begin{proof}
The proof is given in Appendix \ref{proof:primdualalgo}.
\end{proof}
\vspace{-10pt}
In fact, (\ref{eqn:IRMTVL1update}) or (\ref{eqn:MinimaxTVL1update2}) facilitates adversarial learning to dynamically adjust model parameters $(\Psi,\Phi)$ or $(\rho,\Psi,\Phi)$ to balance training loss and OOD generalization. This approach extends the theoretical work of IRM-TV by providing a practical, dynamic implementation of $\lambda(\Psi,\Phi)$ that adapts to the most adverse environments.

\noindent\textbf{Remark:} the above OOD-TV-IRM and OOD-TV-Minimax models as well as the primal-dual algorithm can also be applied to the TV-$\ell_2$ versions, which are omitted here since they are simpler differentiable cases.

\subsection{Implementation with Neural Networks}
\label{sec:implenet}

First, we note that all the theoretical results in this section hold with any functions satisfying the corresponding conditions (e.g., continuity, differentiability), not limited to neural networks. To provide a specific instance, we implement $\lambda(\Psi,\Phi)$ with neural networks that take $\Psi$ and $\Phi$ as network trainable parameters and network inputs, respectively. Besides $\lambda(\Psi,\Phi)$, $\Phi$ and $\rho$ can also be developed by neural networks. Specifically, $\Phi: \mX\rightarrow \mH$ is the feature extractor that maps a sample $x$ from the sample space $\mX$ to the feature space $\mH$. It can be implemented by different types of neural networks depending on the practical situation, as shown in Table \ref{architecture}. The data sets correspond to different synthetic or real-world tasks, which will be illustrated in Section \ref{sec:experiment}. For example, $\Phi$ can be a multilayer perceptron in Simulation, or a convolutional neural network in Landcover. Similarly, $\rho: \mZ\rightarrow \Delta^E$ is the environment inference operator that maps the auxiliary variables $z\in\mZ$ to the probability space $\Delta^E:=\{v\in\bbRE: v\geqs 0_{(E)} \ \text{and} \ v\cdot 1_{(E)}=1  \}$, where $\bbRE$ denotes the $E$-dimensional real space (i.e., $E$ environments). Moreover, the architectures of $\rho$ are developed by multilayer perceptrons, which are compatible with the corresponding $\Phi$, as shown in Table \ref{architecture}.

Without ambiguity, we can also use $\Phi$ to denote the parameters of the feature extractor. Then $\lambda(\Psi,\Phi)$ can be implemented as a neural network that takes $\Phi$ as input and $\Psi$ as network parameters. We provide tractable architectures of $\lambda$ in Table \ref{architecture}, which should also be compatible with the corresponding $\Phi$. With the primal-dual algorithm in Section \ref{sec:advlearn}, $\Phi$ and $\Psi$ can be learned in an adversarial way to improve OOD generalization, which puts Theorem \ref{thm:seminash} into practice.

\begin{table}[h]
\caption{Network architectures of the invariant feature extractor $\Phi$, the environment inference operator $\rho$, and the TV penalty strength $\lambda$ w.r.t. their corresponding inputs.}
\label{architecture}
\begin{center}
\begin{small}
\begin{sc}
\scalebox{0.735}{
\begin{tabular}{@{}lll@{}}
\toprule
\multicolumn{1}{c}{\textbf{Data Set}} & \multicolumn{1}{c}{$x\to \Phi$} & \multicolumn{1}{c}{$z\to\rho$} \\
\hline
Simulation  & {Linear(15, 1)} & {Linear(1, 16)$\rightarrow$ReLu()$\rightarrow$Linear(16, 1) $\rightarrow$Sigmoid() }\\
CelebA & Linear(512, 16)$\rightarrow$ReLu()$\rightarrow$Linear(16, 1) & Linear(7, 16)$\rightarrow$ReLu()$\rightarrow$Linear(16, 1)$\rightarrow$Sigmoid() \\ 
\multirow{4}{*}{Landcover} & Conv1d(8, 32, 5)$\rightarrow$ReLu()$\rightarrow$Conv1d(32, 32, 3) & \multirow{4}{*}{Linear(2, 16)$\rightarrow$ReLu()$\rightarrow$Linear(16, 2)$\rightarrow$Softmax()}\\
   & $\rightarrow$ReLu()$\rightarrow$MaxPool1d(2, 2)$\rightarrow$Conv1d(32, 64, 3)    &    \\
     & $\rightarrow$ReLu()$\rightarrow$MaxPool1d(2, 2)$\rightarrow$Conv1d(64, 6, 3)  &   \\
     & $\rightarrow$ReLu()$\rightarrow$AvePool1d(1)  &   \\
Adult  & Linear(59, 16)$\rightarrow$ReLu()$\rightarrow$Linear(16, 1) & Linear(6, 16)$\rightarrow$ReLu()$\rightarrow$Linear(16, 4)$\rightarrow$Softmax() \\ 
House Price &  Linear(15,32) $\rightarrow$ReLu() $\rightarrow$ Linear(32,1) & Linear(1,64)$\rightarrow$ ReLu() $\rightarrow$ Linear(64,4) $\rightarrow$ Softmax() \\
\multirow{4}{*}{Colored MNIST} & Conv2d(3,32,3) $\rightarrow$  MaxPool2d(2,2)  & 
\multirow{4}{*}{Linear(3,16) $\rightarrow$ ReLU()  $\rightarrow$ Linear(16,1) $\rightarrow$ Softmax() }\\
&$\rightarrow$Conv2d(32,64,3) $\rightarrow$ MaxPool2d(2,2) &\\
&$\rightarrow$ Conv2d(64,128,3) $\rightarrow$ MaxPool2d(2,2)&\\
& $\rightarrow$ Linear(128*3*3,128)$\rightarrow$ Linear(128,10)&           \\

NICO  &  ResNet-18()  &   Linear(3,16) $\rightarrow$ ReLU()  $\rightarrow$ Linear(16,1)$\rightarrow$ Softmax() \\
\bottomrule
\end{tabular}}
	\scalebox{0.9}{\begin{tabular}{@{}l l}
		\toprule
		\multicolumn{1}{c}{\textbf{Data Set}} & \multicolumn{1}{c}{$\Phi\to \lambda$} \\
		\hline 
		Simulation & Linear(11, 1) $\rightarrow$ ReLU() $\rightarrow$ Linear(1, 1) $\rightarrow$ Softplus() \\

		CelebA & Linear(8225, 16) $\rightarrow$ ReLU() $\rightarrow$ Linear(16, 1) $\rightarrow$ Softplus() \\

		Landcover & Linear(173574, 32) $\rightarrow$ ReLU() $\rightarrow$ Linear(32, 1) $\rightarrow$ Softplus() \\
		Adult & Linear(977, 16) $\rightarrow$ ReLU() $\rightarrow$ Linear(16, 1) $\rightarrow$ Softplus() \\
		House Price & Linear(545, 32) $\rightarrow$ ReLU() $\rightarrow$ Linear(32, 16) $\rightarrow$ Softplus() $\rightarrow$ Linear(16,1)\\
		Colored MNIST & Linear(242122,32) $\rightarrow$ ReLU() $\rightarrow$ Linear(32,1) $\rightarrow$ Softplus()  \\
		NICO   & Linear(11180113,32) $\rightarrow$ ReLU() $\rightarrow$ Linear(32,1) $\rightarrow$ Softplus()\\
\bottomrule
	\end{tabular}}
\end{sc}
\end{small}
\end{center}
\vskip -0.1in
\end{table}

\section{Experimental Results}
\label{sec:experiment}
We conduct experiments on $7$ simulation and real-world data sets to evaluate the effectiveness of the proposed OOD-TV-IRM and OOD-TV-Minimax in different OOD generalization tasks. These tasks largely follow the settings in the literature of IRM \citep{3,8}. We also include the original IRM \citep{2}, ZIN \citep{3}, IRM-TV-$\ell_1$, Minimax-TV-$\ell_1$ \citep{8}, EIIL \citep{EIIL}, LfF \citep{LfF}, and TIVA \citep{TIVA} frameworks in comparisons to verify the effectiveness of the proposed primal-dual optimization and adversarial learning scheme. Note that IRM and ZIN are equivalent to IRM-TV-$\ell_2$ and Minimax-TV-$\ell_2$, respectively. Each experiment is repeated $10$ times to record the mean and standard deviation (STD) of the results for each compared method. The code is available at \url{https://github.com/laizhr/OOD-TV-IRM}.

\subsection{Simulation Data}
The simulation data consists of temporal heterogeneity observations with distributional shift w.r.t time, which is used in \citep{3, 7} to evaluate OOD generalization. Details of generating this data set are provided in Appendix \ref{app:simugen}. Among all the parameter settings, the most challenging one is $(p_{s}^{-}, p_{s}^{+}, p_{v})=(0.999, 0.9, 0.8)$, thus we use this setting in the evaluation.

Table \ref{tb:celeba} (left) shows the mean and worst accuracies of different methods over the four test environments on simulation data. The proposed OOD-TV-based methods outperform their counterparts in all the mean and  worst accuracy cases. Moreover, OOD-TV-IRM-$\ell_1$ achieves the highest accuracies among all the compared methods in all the cases, while OOD-TV-Minimax-$\ell_2$ achieves the highest accuracies among all the compared methods without environment partition in all the cases. Hence the OOD-TV-based methods achieve the best performance regardless of whether the environment partition is available or not.

\begin{table}[h]
	\caption{Accuracies of different methods on simulation data (left) and CelebA (right).}
	\label{tb:celeba}
	\centering
	\resizebox{\textwidth}{!}{ 
		\begin{tabular}{lll}
\toprule
			\multicolumn{1}{c}{\textbf{Method}} & \multicolumn{1}{c}{\textbf{Mean $\pm$ STD}} & \multicolumn{1}{c}{\textbf{Worst $\pm$ STD}} \\
			\hline 
			EIIL    &  $ 0.8161 \pm 0.0364 $  & $ 0.7159 \pm 0.0540 $  \\
			LfF		&  $ 0.7240\pm 0.0894 $  & $  0.6965 \pm 0.1049 $ \\
			TIVA    &   $ 0.8274 \pm 0.0454 $	 & $ 0.8139 \pm 0.0479 $ \\
			\hline
			ZIN       &   $ 0.7916 \pm 0.0349  $ & $ 0.7338 \pm 0.0573  $  \\
			OOD-TV-Minimax-$\ell_2$      & $\mathbf{0.8916 \pm 0.0202 } $ & $\mathbf{0.8858\pm 0.0223}  $        \\
			\hline
			Minimax-TV-$\ell_1$    &  $ 0.7555\pm 0.0742 $  &  $0.6931\pm0.0965$  \\
			OOD-TV-Minimax-$\ell_1$    & $\mathbf{0.8379 \pm 0.0418 } $  &  $\mathbf{0.7805 \pm 0.0612 } $     \\
			\hline
			IRM       & $ 0.8670\pm 0.0193$  & $ 0.8559\pm 0.0285$  \\
			OOD-TV-IRM-$\ell_2$    & $\mathbf{0.8904 \pm 0.0150 } $ & $\mathbf{0.8831 \pm 0.0195 } $ \\
			\hline
			IRM-TV-$\ell_1$       & $ 0.8713\pm 0.0201$ & $ 0.8618\pm 0.0209$ \\
			OOD-TV-IRM-$\ell_1$ &  $\mathbf{0.8944 \pm 0.0150 } $ & $\mathbf{0.8877 \pm 0.0168 } $  \\
\bottomrule
		\end{tabular}
		\hspace{2em}
		\begin{tabular}{lll}
\toprule
			\multicolumn{1}{c}{\textbf{Method}} & \multicolumn{1}{c}{\textbf{Mean $\pm$ STD}} & \multicolumn{1}{c}{\textbf{Worst $\pm$ STD}} \\
			\hline 
			EIIL    &  $ 0.7323 \pm 0.0284 $  & $ 0.7240 \pm 0.0291 $  \\
			LfF		&  $ 0.6092 \pm 0.0529 $  & $  0.5983 \pm 0.0598 $ \\
            TIVA  	&	$ 0.7695 \pm 0.0199 $	 & $ 0.7283 \pm 0.0209 $ \\
			\hline
			ZIN       & $ 0.7155 \pm 0.0298 $  &  $ 0.6034 \pm 0.0544 $ \\
			OOD-TV-Minimax-$\ell_2$      & $\mathbf{0.7608 \pm 0.0187 } $  & $\mathbf{0.6895 \pm 0.0243 } $        \\
			\hline
			Minimax-TV-$\ell_1$    &  $ 0.6876 \pm 0.0190 $  &  $ 0.5541 \pm 0.0301 $   \\
			OOD-TV-Minimax-$\ell_1$      &  $ \mathbf{  0.7044 \pm 0.0361} $ & $\mathbf{0.6180 \pm 0.0580 }$      \\
			\hline
			IRM        &  $ 0.7630 \pm 0.0261 $ &$ 0.7410 \pm 0.0215 $ \\
			OOD-TV-IRM-$\ell_2$    & $\mathbf{ 0.7786 \pm 0.0157 }$  & $ \mathbf{  0.7549 \pm 0.0227} $ \\
			\hline
			IRM-TV-$\ell_1$       & $ 0.7751 \pm 0.0203 $  & $ 0.7461 \pm 0.0170 $ \\
			OOD-TV-IRM-$\ell_1$ & $\mathbf{ 0.7945 \pm 0.0162} $  & $ \mathbf{0.7560 \pm 0.0170} $  \\
\bottomrule
		\end{tabular}
	}
\end{table}

\subsection{CelebA}

The CelebA data set \citep{celebA} contains face images of celebrities. The task is to identify smiling faces, which are deliberately correlated with the gender variable. The 512-dimensional deep features of the face images are extracted using a pre-trained ResNet18 model \citep{resnet}, while the invariant features are learned using subsequent multilayer perceptrons (MLP). EIIL, LfF, TIVA, ZIN, OOD-TV-Minimax-$\ell_2$, Minimax-TV-$\ell_1$ and OOD-TV-Minimax-$\ell_1$ take seven additional descriptive variables for environment inference, including \textit{Young}, \textit{Blond Hair}, \textit{Eyeglasses}, \textit{High Cheekbones}, \textit{Big Nose}, \textit{Bags Under Eyes}, and \textit{Chubby}. As for IRM, OOD-TV-IRM-$\ell_2$, IRM-TV-$\ell_1$, and OOD-TV-IRM-$\ell_1$, they use the gender variable as the environment indicator.

Table \ref{tb:celeba} (right) shows the accuracies of different methods on CelebA. The proposed OOD-TV-based methods outperform their counterparts in all the 8 cases with both mean and worst accuracies. Moreover, OOD-TV-IRM-$\ell_1$ achieves the highest accuracies among all the competitors. Hence OOD-TV-IRM improves OOD generalization compared with IRM-TV.

\subsection{Landcover}

The Landcover data set consists of time series data and the corresponding land cover types derived from satellite images \citep{landcover2006, landcover2020, landcover2021}. The input data has dimensions of $46 \times 8$ and is used to identify one of six land cover types. The invariant feature extractor $\Phi$ is implemented as a 1D-CNN to process the time series input, following the approaches of \citet{landcover2021} and \citet{3}. In scenarios where ground-truth environment partitions are unavailable, latitude and longitude are used as auxiliary information for environment inference. All methods are trained on non-African data, and then tested on both non-African (from regions not overlapping with the training data) and African regions. This is a complex and challenging experiment among the 7 experiments of this paper.

Table \ref{tb:landcover} (left) shows the accuracies of different methods on Landcover. The proposed OOD-TV-based methods outperform their counterparts in all the 8 cases with both mean and worst accuracies. Hence the OOD-TV-based methods show competitive performance in this challenging task.

\begin{table}[h]
	\caption{Accuracies of different methods on Landcover (left) and adult income prediction (right).}
	\label{tb:landcover}
	\centering
	\resizebox{\textwidth}{!}{ 
		\begin{tabular}{lll}
\toprule
			\multicolumn{1}{c}{\textbf{Method}} & \multicolumn{1}{c}{\textbf{Mean $\pm$ STD}} & \multicolumn{1}{c}{\textbf{Worst $\pm$ STD}} \\
			\hline 
			EIIL    &  $ 0.6653 \pm 0.0152 $  & $ 0.6614 \pm 0.0156 $  \\
			LfF		&  $ 0.5097 \pm 0.0567 $  & $  0.5017 \pm 0.0580 $ \\
			TIVA  	&	$ 0.5388 \pm 0.0154 $	 & $ 0.5351 \pm 0.0175 $ \\
			\hline
			ZIN       & $ 0.6511 \pm 0.0254 $  &  $ 0.6468 \pm 0.0228$ \\
			OOD-TV-Minimax-$\ell_2$      & $\mathbf{0.6530 \pm 0.0178 } $  & $\mathbf{0.6490 \pm 0.0124 } $        \\
			\hline
			Minimax-TV-$\ell_1$    &  $ 0.6573 \pm 0.0219 $  &  $ 0.6551 \pm 0.0176 $   \\
			OOD-TV-Minimax-$\ell_1$      &  $ \mathbf{  0.6604 \pm 0.0151} $ & $\mathbf{0.6566 \pm 0.0150 }$      \\
			\hline
			IRM        &  $ 0.6472 \pm 0.0185 $ &$ 0.6427 \pm 0.0154 $ \\
			OOD-TV-IRM-$\ell_2$    & $\mathbf{ 0.6705 \pm 0.0132 }$  & $ \mathbf{  0.6672 \pm 0.0174} $ \\
			\hline
			IRM-TV-$\ell_1$       & $ 0.6422 \pm 0.0196 $  & $ 0.6390 \pm 0.0208 $ \\
			OOD-TV-IRM-$\ell_1$ & $\mathbf{ 0.6672 \pm 0.0182} $  & $ \mathbf{0.6636 \pm 0.0197} $  \\
\bottomrule
		\end{tabular}
		\hspace{2em}
		\begin{tabular}{lll}
\toprule
			\multicolumn{1}{c}{\textbf{Method}} & \multicolumn{1}{c}{\textbf{Mean $\pm$ STD}} & \multicolumn{1}{c}{\textbf{Worst $\pm$ STD}} \\
			\hline 
			EIIL    &  $ 0.7734 \pm 0.0120 $  & $ 0.7395 \pm 0.0132 $  \\
			LfF		&  $ 0.7785 \pm 0.0089 $  & $  0.7545 \pm 0.0084 $ \\
			TIVA  	&	$ 0.8226 \pm 0.0071 $	 & $ 0.8085 \pm 0.0086 $ \\
			\hline
			ZIN       & $ 0.8201 \pm 0.0093 $  &  $ 0.8058 \pm 0.0101$ \\
			OOD-TV-Minimax-$\ell_2$      & $\mathbf{0.8345 \pm 0.0055 } $  & $\mathbf{0.8105 \pm 0.0063 } $        \\
			\hline
			Minimax-TV-$\ell_1$    &  $ 0.8101 \pm 0.0082 $  &  $ 0.7958 \pm 0.0103 $   \\
			OOD-TV-Minimax-$\ell_1$      &  $ \mathbf{  0.8330 \pm 0.0058} $ & $\mathbf{0.8093 \pm 0.0075 }$      \\
			\hline
			IRM        &  $ 0.8257 \pm 0.0069 $ &$ 0.8052 \pm 0.0112 $ \\
			OOD-TV-IRM-$\ell_2$    & $\mathbf{ 0.8369 \pm 0.0109 }$  & $ \mathbf{  0.8103 \pm 0.0091} $ \\
			\hline
			IRM-TV-$\ell_1$       & $ 0.8298 \pm 0.0074 $  & $ 0.8054 \pm 0.0075 $ \\
			OOD-TV-IRM-$\ell_1$ & $\mathbf{ 0.8435 \pm 0.0089} $  & $ \mathbf{0.8197 \pm 0.0091} $  \\
\bottomrule
		\end{tabular}
	}
\end{table}

\subsection{Adult Income Prediction}

This task uses the Adult data set\footnote{https://archive.ics.uci.edu/dataset/2/adult} to predict whether an individual's income exceeds \$50K per year based on census data. The data set is split into four subgroups representing different environments based on \textit{race} $\in$ \{Black, Non-Black\} and \textit{sex} $\in$ \{Male, Female\}. Two-thirds of the data from the Black Male and Non-Black Female subgroups are randomly selected for training, and the compared methods are verified across all four subgroups using the remaining data. Six integer variables — \textit{Age}, \textit{FNLWGT}, \textit{Education-Number}, \textit{Capital-Gain}, \textit{Capital-Loss}, and \textit{Hours-Per-Week} — are fed into EIIL, LfF, TIVA, ZIN, OOD-TV-Minimax-$\ell_2$, Minimax-TV-$\ell_1$ and OOD-TV-Minimax-$\ell_1$ for environment inference. Ground-truth environment indicators are provided for IRM, OOD-TV-IRM-$\ell_2$, IRM-TV-$\ell_1$, and OOD-TV-IRM-$\ell_1$. Categorical variables, excluding race and sex, are encoded using one-hot encoding, followed by principal component analysis (PCA), retaining over 99\% of the cumulative explained variance. The transformed features are then combined with the six integer variables, resulting in 59-dimensional representations, which are normalized to have zero mean and unit variance for invariant feature learning.

Table \ref{tb:landcover} (right) shows the accuracies of different methods on this income prediction task. The proposed OOD-TV-based methods outperform their counterparts in all the 8 cases with both mean and worst accuracies. Hence they are more effective and robust than the original IRM-TV methods in this prediction task regardless of whether the environment partition is available or not.

\subsection{House Price Prediction}
The above four experiments are all classification tasks. We also perform a regression task with the House Prices data set\footnote{https://www.kaggle.com/c/house-prices-advanced-regression-techniques/data}. This task uses $15$ variables, such as the number of bathrooms, locations, and other similar features, to predict the house price. The training and test sets consist of samples with built years in periods $[1900, 1950]$ and $(1950, 2000]$, respectively. The house prices within the same built year are normalized. The built year variable is fed into EIIL, TIVA, ZIN, OOD-TV-Minimax-$\ell_2$, Minimax-TV-$\ell_1$, and OOD-TV-Minimax-$\ell_1$ for environment inference. LfF is not suitable for this regression task due to the unstable training process and poor performance. The training set is partitioned into $5$ subsets with $10$-year range in each subset. Samples in the same subset can be seen as having the same environment.

Table \ref{tb:houseprice} (left) provides the mean squared errors (MSE) of different methods in this regression task. The OOD-TV-based methods achieves lower MSEs than their counterparts in the mean result of all the environments and the worst environment. Hence they are more effective and robust than the original IRM-TV methods in the regression task.

\begin{table}[h]
	\centering
	\caption{Mean squared errors of different methods on house price prediction (left) and accuracies of different methods on Colored MNIST (right).}
	\label{tb:houseprice}
	\resizebox{\textwidth}{!}{ 
		\begin{tabular}{lcc}
\toprule
			\multicolumn{1}{c}{\textbf{Method}} & \multicolumn{1}{c}{\textbf{Mean $\pm$ STD}} & \multicolumn{1}{c}{\textbf{Worst $\pm$ STD}} \\
			\hline 
			EIIL    &  $ 0.3644 \pm 0.0512 $  & $ 0.4953 \pm 0.0587 $ \\
			TIVA  	&	$ 0.3869 \pm 0.0413 $	 & $ 0.5519 \pm 0.0508 $ \\
			\hline    
			ZIN       & $ 0.3203 \pm 0.0378 $  &  $ 0.4465 \pm 0.0552$ \\
			OOD-TV-Minimax-$\ell_2$      & $\mathbf{0.2743 \pm 0.0438 } $  & $\mathbf{0.3983 \pm 0.0586 } $        \\
			\hline
			Minimax-TV-$\ell_1$    &  $ 0.3339 \pm 0.0508 $  &  $ 0.4741 \pm 0.0520 $   \\
			OOD-TV-Minimax-$\ell_1$      &  $ \mathbf{  0.2621 \pm 0.0510} $ & $\mathbf{0.3706 \pm 0.0549 }$      \\
			\hline
			IRM        &  $ 0.4589 \pm 0.0762 $ &$ 0.6378 \pm 0.0873 $ \\
			OOD-TV-IRM-$\ell_2$    & $\mathbf{ 0.3549 \pm 0.0519 }$  & $ \mathbf{  0.4772 \pm 0.0591} $ \\
			\hline
			IRM-TV-$\ell_1$       & $ 0.4009 \pm 0.0355 $  & $ 0.5642 \pm 0.0615 $ \\
			OOD-TV-IRM-$\ell_1$ & $\mathbf{ 0.3383 \pm 0.0303} $  & $ \mathbf{0.4763 \pm 0.0428} $  \\
\bottomrule
		\end{tabular}\hspace{2em}
		\begin{tabular}{lc}
\toprule
			\multicolumn{1}{c}{\textbf{Method}} & \multicolumn{1}{c}{\textbf{Mean $\pm$ STD}} \\
			\hline 
			EIIL    &  $ 0.8236 \pm 0.0265 $   \\
			LfF  	&	$ 0.8964 \pm 0.0250 $	 \\
			\hline
			ZIN &  $ 0.9502 \pm 0.0153 $ \\
			OOD-TV-Minimax-$\ell_2$ &  $ \mathbf{  0.9671 \pm 0.0060} $ \\
			\hline
			Minimax-TV-$\ell_1$ &  $ 0.9322 \pm 0.0362 $ \\
			OOD-TV-Minimax-$\ell_1$ &  $ \mathbf{  0.9514 \pm 0.0242} $ \\
			\bottomrule
	\end{tabular}}
\end{table}

\subsection{Colored MNIST}
We use the Colored MNIST data set\footnote{https://www.kaggle.com/datasets/youssifhisham/colored-mnist-dataset} to evaluate the performance of our approach in multi-group classification of a more general scenario. It consists of hand-written digits of $0$ to $9$ with different background colors of red, green, and blue. To make it more challenging, we remove the background color information of samples, so that only the mean values of the red, green, and blue channels of a sample can be used as three auxiliary variables for environment inference. EIIL, LfF, OOD-TV-Minimax, and Minimax-TV are suitable for this setting, while TIVA is unavailable due to unknown auxiliary feature extractions. Their classification accuracies are shown in Table \ref{tb:houseprice} (right). The two OOD-TV-Minimax methods achieve the highest accuracies among all the compared methods in this multi-group classification task. Thus OOD-TV-Minimax is effective in improving OOD generalization for such a complex scenario.

\subsection{NICO}     
The NICO data set \citep{NICO} is a widely-used benchmark in Non-Independent and Identically Distributed (Non-I.I.D.) image classification with contexts. This data set presents challenges due to both correlation shift and diversity shift. There are two superclasses in NICO: \emph{Vehicle} and \emph{Animal}, which consist of $9$ classes and $10$ classes, respectively. Each class has several contexts, while different classes may have different context sets. We randomly split each context into $80\%$ training samples and $20\%$ test samples. Then we remove the context information of samples and unite all the contexts to form a whole class. Therefore, only the mean values of the red, green, and blue channels of a sample can be used as three auxiliary variables for environment inference. EIIL, OOD-TV-Minimax, and Minimax-TV can be used in this setting, while LfF has unstable training process and poor performance, and TIVA is unavailable due to unknown auxiliary feature extractions. As shown in Table \ref{tb:nico}, the two OOD-TV-Minimax methods outperform other competitors in terms of classification accuracy. Thus OOD-TV-Minimax is effective in improving OOD generalization with both correlation shift and diversity shift.

\begin{table}[h]
	\centering
	\caption{Accuracies of different methods on NICO. }
	\label{tb:nico}
	\begin{tabular}{lcc}
\toprule
		\textbf{Method} & \textbf{ Vehicle (Mean $\pm$ STD)} & \textbf{Animal (Mean $\pm$ STD)} \\ 
		\hline 
		EIIL  	&	$ 0.6902 \pm 0.0357 $ &		$ 0.8664 \pm 0.0110 $  \\
		\hline
		ZIN & $ 0.7176 \pm 0.0374 $ & $ 0.8541 \pm 0.0250 $ \\ 
		OOD-TV-Minimax-$\ell_2$ & $ \mathbf{  0.7872 \pm 0.0275} $ & $ \mathbf{  0.8826 \pm 0.0249} $ \\ 
		\hline
		Minimax-TV-$\ell_1$ & $ 0.7104 \pm 0.0236 $ & $ 0.8289 \pm 0.0311 $ \\ 
		OOD-TV-Minimax-$\ell_1$ & $ \mathbf{  0.7289 \pm 0.0206} $ & $ \mathbf{  0.9112 \pm 0.0154} $ \\ 
		\bottomrule
	\end{tabular}
\end{table}

\subsection{Adversarial Learning Process}

To visualize the adversarial learning process, we plot the objective values and parameter changes for OOD-TV-IRM and OOD-TV-Minimax on simulation data in Figure \ref{fig:advtrain}. Specifically, the overall objectives are the $g(\Psi,\Phi)$ and $h(\rho,\Psi,\Phi)$ defined in (\ref{eqn:IRMv1wTVL1obj}) and (\ref{eqn:MinimaxTVL1obj}), respectively. The penalty terms correspond to the second terms in (\ref{eqn:IRMv1wTVL1obj}) and (\ref{eqn:MinimaxTVL1obj}), respectively. The parameter changes correspond to $\|\Phi^{(k+1)}-\Phi^{(k)}\|_2$ and $\|\Psi^{(k+1)}-\Psi^{(k)}\|_2$, respectively. We follow the annealing strategy of \citep{3} in the early epochs, thus the adversarial learning starts from the $2001$st epoch. The Adam scheme \citep{adam} is adopted as the optimizer. Figure \ref{fig:advtrain} indicates that the adversarial learning process becomes stable with $400$ epochs. Both of the overall objective and the penalty term converge to form a stable gap, which corresponds to the fidelity term to be optimized. Besides, both $\|\Phi^{(k+1)}-\Phi^{(k)}\|_2$ and $\|\Psi^{(k+1)}-\Psi^{(k)}\|_2$ converge to small values, which indicates that the parameters $\Phi^{(k)}$ and $\Psi^{(k)}$ are rarely updated. 

We also demonstrate the adversarial learning process for OOD-TV-Minimax (since OOD-TV-IRM is not applicable) on Colored MNIST in Figure \ref{fig:advtrain2}. The Adam optimizer is used for the primal parameter $\Phi$, while the dual update (\ref{eqn:MinimaxTVL1update2}, \ref{eqn:primdualalgoh}) is used for $\Psi$ to ensure convergence. Results show that the adversarial learning process effectively converges after $600$ epochs for OOD-TV-Minimax-$\ell_1$ and after $300$ epochs for OOD-TV-Minimax-$\ell_2$. All these results indicate that the adversarial learning process is feasible.

\section{Conclusion and Discussion}
We extend the invariant risk minimization based on total variation model (IRM-TV) to a Lagrangian multiplier model OOD-TV-IRM, in order to improve out-of-distribution (OOD) generalization. OOD-TV-IRM is essentially a primal-dual optimization model. The primal optimization minimizes the entire invariant risk to extract invariant features, while the dual optimization strengthens the autonomous TV penalty to provide an adversarial interference. The objective of OOD-TV-IRM is to find a semi-Nash equilibrium that balances the training loss and OOD generalization. We further develop a convergent primal-dual algorithm to facilitate adversarial learning for invariant features and adverse environments. Experimental results show that the proposed OOD-TV-IRM framework improves effectiveness and robustness in most experimental tasks, achieving competitive mean accuracies, worst accuracies, and mean squared errors across different test environments. 

One potential future direction is to maximize the benefits of OOD-TV-IRM by improving the diversity and representation of training environments. Another potential approach is to improve the effectiveness of the primal-dual optimization scheme. A third potential approach is to develop new penalties with different variations beyond the widely-used TV penalty.

\section*{Acknowledgments}
This work is supported in part by the National Natural Science Foundation of China under grant 62176103, and in part by the Science and Technology Planning Project of Guangzhou under grant 2024A04J9896.

\bibliography{iclr2025_conference}
\bibliographystyle{iclr2025_conference}

\clearpage

\appendix

\setcounter{table}{0}
\renewcommand{\thetable}{A\arabic{table}}

\setcounter{figure}{0}
\renewcommand{\thefigure}{A\arabic{figure}}

\section{Appendix}

\subsection{Proof of Theorem \ref{thm:seminash}}
\label{proof:seminash}

\begin{proof}
\noindent\textbf{Part (a).} 

First, we verify that there exists at least one solution to (\ref{eqn:IRMv1wTVL1ood}). Since $\lambda(\Psi,\Phi)$ is continuous w.r.t. $(\Psi,\Phi)$ and $(\Psi,\Phi)$ belongs to a bounded and closed set, a solution to $\max_{\Psi} \lambda(\Psi,\Phi)$ can be obtained for any given $\Phi$ by the Weierstrass extreme value theorem, denoted as $\Psi_{max}(\Phi)$. Let $\Lambda(\Phi):=\lambda(\Psi_{max}(\Phi),\Phi)$. We then turn to prove that $\Lambda(\Phi)$ is a continuous function w.r.t. $\Phi$. For $\Phi_1\ne \Phi_2$, we have
\begin{align}
\label{eqn:continlambda}
\lambda(\Psi_{max}(\Phi_2),\Phi_1)\leqs \lambda(\Psi_{max}(\Phi_1),\Phi_1).
\end{align}
From the continuity of $\lambda$, for any given $\epsilon>0$, there exists some $\delta_1>0$, such that for any $\Phi\in \rB(\Phi_1,\delta_1)$, we have
\begin{align}
\label{eqn:continlambda2}
\lambda(\Psi_{max}(\Phi_2),\Phi)-\epsilon<\lambda(\Psi_{max}(\Phi_2),\Phi_1),
\end{align}
where $\rB(\Phi_1,\delta_1)$ denotes an open ball centered at $\Phi_1$ with radius $\delta_1$. Letting $\Phi_2\in \rB(\Phi_1,\delta_1)$ in (\ref{eqn:continlambda2}) and combining it with (\ref{eqn:continlambda}), we have
\begin{align}
\label{eqn:continlambda3}
\lambda(\Psi_{max}(\Phi_2),\Phi_2)-\epsilon<\lambda(\Psi_{max}(\Phi_1),\Phi_1).
\end{align}
Following a similar deduction, for the above $\epsilon>0$, there exists some $\delta_2>0$, such that for $\Phi_1\in \rB(\Phi_2,\delta_2)$, we have
\begin{align}
\label{eqn:continlambda4}
\lambda(\Psi_{max}(\Phi_1),\Phi_1)-\epsilon<\lambda(\Psi_{max}(\Phi_2),\Phi_2).
\end{align}
Combining (\ref{eqn:continlambda3}) and (\ref{eqn:continlambda4}), for any given $\epsilon>0$, as long as $\| \Phi_1-\Phi_2 \|_2 < \min\{\delta_1,\delta_2\}$, we have
\begin{align}
\label{eqn:continlambda5}
|\lambda(\Psi_{max}(\Phi_1),\Phi_1)-\lambda(\Psi_{max}(\Phi_2),\Phi_2)|<\epsilon.
\end{align}
This proves that $\Lambda(\Phi)$ is continuous w.r.t. $\Phi$.

As for $(\mathbb{E}_{w} [|\nabla_w R(w \circ \Phi)| ])^2$, the TV operator and the expectation are taken on the variable $w$, which does not affect the continuity w.r.t. $\Phi$. Define
\begin{align}
\label{eqn:continG}
G(\Phi):=g(\Psi_{max}(\Phi),\Phi)=\mathbb{E}_{w} [R(w \circ \Phi)] + \Lambda(\Phi)(\mathbb{E}_{w} [|\nabla_w R(w \circ \Phi)| ])^2.
\end{align}
It can be seen that $G(\Phi)$ is the sum, multiplication, and composition of continuous functions w.r.t. $\Phi$, thus it is also continuous w.r.t. $\Phi$. Using the Weierstrass extreme value theorem again, there exists a solution to $\min_{\Phi} G(\Phi)$, denoted by $\Phi^*$. Let $\Psi^*:=\Psi_{max}(\Phi^*)$, then $(\Psi^*,\Phi^*)$ is a solution to (\ref{eqn:IRMv1wTVL1ood}).

\noindent\textbf{Part (b).} 

Next, we turn to investigate $h(\rho,\Psi,\Phi)$ in (\ref{eqn:MinimaxTVL1ood}). From Appendix A.3 of IRM-TV \citep{8},
\begin{align}
\label{eqn:continexpect}
\mathbb{E}_{w \leftarrow \rho} [|\nabla_w R(w \circ \Phi)| ]=\sum_{i=1}^E|\nabla_w R(w{\circ} \Phi,e_i)| \rho_i,
\end{align}
where $e_i$ denotes the $i$-th training environment in the literature of IRM, and $\rho_i$ denotes the environment inference operator for the $i$-th training environment. (\ref{eqn:continexpect}) indicates that $\mathbb{E}_{w \leftarrow \rho} [|\nabla_w R(w \circ \Phi)| ]$ is a linear function w.r.t. $\rho\in \Delta^E$, where $\Delta^E$ is the $E$-dimensional simplex defined in Section \ref{sec:implenet}. Thus $\mathbb{E}_{w \leftarrow \rho} [|\nabla_w R(w \circ \Phi)| ]$ as well as $(\mathbb{E}_{w \leftarrow \rho} [|\nabla_w R(w \circ \Phi)| ])^2$ is continuous w.r.t. $\rho$. On the other hand, $\lambda(\Psi,\Phi)$ is continuous w.r.t. $\Psi$. Define
\begin{align}
\label{eqn:continexpect2}
\fE(\rho,\Phi):=(\mathbb{E}_{w \leftarrow \rho} [|\nabla_w R(w \circ \Phi)| ])^2.
\end{align}
To examine the continuity of $\lambda(\Psi,\Phi)\fE(\rho,\Phi)$ w.r.t. the entire dual variable $(\rho,\Psi)$, we check its values at two different points $(\rho_1,\Psi_1)$ and $(\rho_2,\Psi_2)$:
\begin{align}
\label{eqn:continlamexpect}
&|\lambda(\Psi_1,\Phi)\fE(\rho_1,\Phi)-\lambda(\Psi_2,\Phi)\fE(\rho_2,\Phi)|\nonumber\\
<&|\lambda(\Psi_1,\Phi)\fE(\rho_1,\Phi)-\lambda(\Psi_1,\Phi)\fE(\rho_2,\Phi)|+|\lambda(\Psi_1,\Phi)\fE(\rho_2,\Phi)-\lambda(\Psi_2,\Phi)\fE(\rho_2,\Phi)|\nonumber\\
=&|\lambda(\Psi_1,\Phi)|\cdot |\fE(\rho_1,\Phi)-\fE(\rho_2,\Phi)|+|\fE(\rho_2,\Phi)|\cdot|\lambda(\Psi_1,\Phi)-\lambda(\Psi_2,\Phi)|.
\end{align}
When $(\rho_2,\Psi_2)\to (\rho_1,\Psi_1)$, the right hand side of (\ref{eqn:continlamexpect}) tends to zero, thus the left hand side of (\ref{eqn:continlamexpect}) also tends to zero. This verifies that $\lambda(\Psi,\Phi)\fE(\rho,\Phi)$ is continuous w.r.t. $(\rho,\Psi)$ for any given $\Phi$. 

Using the Weierstrass extreme value theorem, there exists at least one solution to $\max_{\rho,\Psi}[\lambda(\Psi,\Phi)\fE(\rho,\Phi)]$ for any given $\Phi$, denoted by $(\rho,\Psi)_{max}(\Phi)$. Define 
\begin{align}
\label{eqn:continH}
H(\Phi):=h((\rho,\Psi)_{max}(\Phi),\Phi)=\mathbb{E}_{w \leftarrow \frac{1_{(E)}}{E}} [ R(w \circ \Phi)] +\max_{\rho,\Psi}[\lambda(\Psi,\Phi)\fE(\rho,\Phi)],
\end{align}
which is continuous w.r.t. $\Phi$. Following similar deductions to Part (a), there exists a solution to $\min_{\Phi} H(\Phi)$, denoted by $\Phi^*$. Let $(\rho^*,\Psi^*):=(\rho,\Psi)_{max}(\Phi^*)$, then $(\rho^*,\Psi^*,\Phi^*)$ is a solution to (\ref{eqn:MinimaxTVL1ood}).

\noindent\textbf{Part (c).} 

Lastly, we turn to verify that any solutions $(\Psi^*,\Phi^*)$ to (\ref{eqn:IRMv1wTVL1ood}) and $(\rho^*,\Psi^*,\Phi^*)$ to (\ref{eqn:MinimaxTVL1ood}) are semi-Nash equilibria. We first fix $\Phi^*$, since $\Psi^*=\Psi_{max}(\Phi^*)$ and $(\rho^*,\Psi^*)=(\rho,\Psi)_{max}(\Phi^*)$, we have $g(\Psi^*,\Phi^*)\geqs g(\Psi,\Phi^*)$ or $h(\rho^*,\Psi^*,\Phi^*)\geqs h(\rho,\Psi,\Phi^*)$ for any $\Psi$ or $(\rho,\Psi)$ in the parameter space. Hence Item 1 in Definition \ref{def:seminash} is satisfied.

On the other hand, from (\ref{eqn:continG}) and (\ref{eqn:continH}), we have $G(\Phi^*)\leqs G(\Phi)$ and $H(\Phi^*)\leqs H(\Phi)$ in the parameter space. Hence Item 2 in Definition \ref{def:seminash} is also satisfied. In summary, Theorem \ref{thm:seminash} is proved.

\end{proof}

\subsection{Proof of Theorem \ref{thm:primdualalgo}}
\label{proof:primdualalgo}
Before starting the proof, we compute some gradients or subgradients in place of the conventional gradients at non-differentiable points for $g$ and $h$. This allows us to maintain the optimization flow with the autograd module of mainstream learning architectures (like Pytorch\footnote{https://pytorch.org/}). First, the subgradient of $|\nabla_w R(w \circ \Phi)|$ w.r.t. $\Phi$ is:
\begin{align}
\label{eqn:subgradnabR}
&\partial_{\Phi} |\nabla_w R(w\circ \Phi)|{=}\left\{\begin{array}{ll}
\frac{J_{\Phi}^\top[\nabla_w R(w\circ \Phi)]*\nabla_w R(w\circ \Phi)}{|\nabla_w R(w\circ \Phi)|} & \text{if}\quad \nabla_w R(w\circ \Phi)\ne 0,\\
0  &  \text{if}\quad   \nabla_w R(w\circ \Phi)=0,
\end{array} \right.
\end{align}
where $J_{\Phi}[\cdot]$ is the Jacobian matrix w.r.t. $\Phi$, and $*$ is the matrix multiplication.

By the chain rule of derivative, the (sub)gradients of $g$ w.r.t. $\Phi$ and $\Psi$ are:
\begin{align}
\label{eqn:IRMTVL1objsubgrad}
\partial_{\Phi} g(\Psi,\Phi) =&\bbE_{w} [\nabla_{\Phi} R(w\circ \Phi)] +2\lambda(\Psi,\Phi)\cdot\bbE_{w} [|\nabla_w R(w\circ \Phi)| ]\cdot\bbE_{w} [\partial_{\Phi}|\nabla_w R(w\circ \Phi)| ]   \nonumber \\
&\quad +\nabla_{\Phi}\lambda(\Psi,\Phi)\cdot(\mathbb{E}_{w} [|\nabla_w R(w \circ \Phi)| ])^2,\nonumber \\
\nabla_{\Psi} g(\Psi,\Phi) =&\nabla_{\Psi}\lambda(\Psi,\Phi)\cdot(\mathbb{E}_{w} [|\nabla_w R(w \circ \Phi)| ])^2.
\end{align}
Similarly, the (sub)gradients of $h$  w.r.t. $\rho$, $\Psi$, and $\Phi$ are:
\begin{align}
\label{eqn:MinimaxTVL1objgrad}
\nabla_{\rho} h(\rho,\Psi,\Phi)=& 2\lambda(\Psi,\Phi)\cdot \bbE_{w{\leftarrow} \rho} [|\nabla_w R(w{\circ} \Phi)| ] \cdot\nabla_{\rho}(\bbE_{w{\leftarrow} \rho} [|\nabla_w R(w{\circ} \Phi)| ]),\nonumber\\
\nabla_{\Psi} h(\rho,\Psi,\Phi)=& \nabla_{\Psi} \lambda(\Psi,\Phi)\cdot(\mathbb{E}_{w \leftarrow \rho} [|\nabla_w R(w \circ \Phi)| ])^2, \nonumber \\
\partial_{\Phi} h(\rho,\Psi,\Phi)=& \bbE_{w \leftarrow \frac{1_{(E)}}{E}} [ \nabla_{\Phi}R(w{\circ} \Phi)]\nonumber \\
&+2\lambda(\Psi,\Phi)\cdot\bbE_{w{\leftarrow} \rho} [|\nabla_w R(w\circ \Phi)| ]\cdot\bbE_{w{\leftarrow} \rho} [\partial_{\Phi}|\nabla_w R(w\circ \Phi)| ] \nonumber \\
&+ \nabla_{\Phi}\lambda(\Psi,\Phi)\cdot(\mathbb{E}_{w \leftarrow \rho} [|\nabla_w R(w \circ \Phi)| ])^2.
\end{align}

\begin{proof}[Proof of Theorem \ref{thm:primdualalgo}]
We first investigate the sequence $\{\Phi^{(k)}\}$. Combining (\ref{eqn:IRMTVL1update}) and (\ref{eqn:primdualalgog}), we have
\begin{align}
\label{eqn:convphi}
\| \Phi^{(k+1)}-\Phi^{(k)}  \|_2\leqs \frac{1}{k^p}.
\end{align}
Summing up both sides of (\ref{eqn:convphi}) from $k=2$ to $\infty$ yields
\begin{align}
\label{eqn:convphia}
\sum_{k=2}^{\infty}\| \Phi^{(k+1)}-\Phi^{(k)}  \|_2\leqs \sum_{k=2}^{\infty}\frac{1}{k^p}<\int_{1}^{\infty} \frac{1}{\zeta^p}\ud \zeta=\frac{1}{1-p}\cdot\frac{1}{\zeta^{p-1}}\arrowvert_{1}^{\infty}=\frac{1}{p-1}<\infty.
\end{align}
It means that the sum of infinite series $\sum_{k=2}^{\infty}\| \Phi^{(k+1)}-\Phi^{(k)}  \|_2$ is convergent. Then for any given $\epsilon>0$, there exists some $N$ such that $\sum_{k=N}^{\infty}\| \Phi^{(k+1)}-\Phi^{(k)}  \|_2<\epsilon$. Hence for any $k>N$ and any $m\in \bbN_+$,
\begin{align}
\label{eqn:convphi2}
\| \Phi^{(k+m)}-\Phi^{(k)}  \|_2\leqs  \sum_{j=k}^{k+m-1} \| \Phi^{(j+1)}-\Phi^{(j)}  \|_2 \leqs \sum_{j=N}^{\infty}\| \Phi^{(j+1)}-\Phi^{(j)}  \|_2<\epsilon.
\end{align}
It indicates that $\{\Phi^{(k)}\}$ is a Cauchy sequence. From the completeness of the parameter space for $\Phi$, $\Phi^{(k)}$ converges to some point $\Phi^\bullet$. Following similar deductions, the sequences $\{\Psi^{(k)}\}$ associated with $g$, $\{\Phi^{(k)}\}$ associated with $h$, and $\{(\rho^{(k)},\Psi^{(k)})\}$ associated with $h$ also converge.

Next, we turn to analyze the computational complexity of this primal-dual algorithm. Without loss of generality, we consider one iteration as having a complexity of $\mO(1)$, since the complexity within one iteration depends on specific forms of $g$ or $h$ and is relatively constant across different (sub)gradient-type algorithms. Suppose the algorithm needs $N$ iterations in order to achieve a convergence tolerance of $\epsilon$. From the above deduction,
we have
\begin{align}
\label{eqn:convphi3}
\|\Phi^{(N)}- \Phi^{\bullet}\|_2 \leqs \sum_{k=N}^{\infty}\| \Phi^{(k+1)}-\Phi^{(k)}  \|_2<\int_{N-1}^{\infty} \frac{1}{\zeta^p}\ud \zeta=\frac{1}{p-1}\cdot \frac{1}{(N-1)^{p-1}}<\epsilon.
\end{align}
The last inequality in (\ref{eqn:convphi3}) indicates that the convergence tolerance $\epsilon$ can be achieved as long as
\begin{align}
\label{eqn:convphi4}
N>((p-1)\epsilon)^{-\frac{1}{p-1}}+1.
\end{align}
By taking the smallest integer for $N$ in (\ref{eqn:convphi4}), we obtain the computational complexity $\mO(((p-1)\epsilon)^{-\frac{1}{p-1}})$.

\end{proof}


\subsection{Simulation Data Generation}
\label{app:simugen}
The binary outcome of interest $Y(t)$ with time $t \in [0, 1]$ is caused and influenced by the invariant features $X_{v}(t)\in \mathbb{R}$ and the spurious features $X_{s}(t) \in \mathbb{R}$:
\begin{align*}
&X_{v}(t)\sim\begin{cases}
\mathcal{N}(1, 1), ~\text{w.p.}~0.5; \\ \mathcal{N}(-1, 1), \text{w.p.}~0.5. 
\end{cases} 
Y(t)\sim\begin{cases}
\text{sign}(X_{v}(t)), &\text{w.p.}~p_{v}; \\ -\text{sign}(X_{v}(t)), &\text{w.p.}~1-p_{v}.
\end{cases} \\
&\qquad \qquad \qquad  \quad X_{s}(t)\sim\begin{cases}
\mathcal{N}(Y(t), 1), &\text{w.p.}~p_{s}(t); \\ \mathcal{N}(-Y(t), 1), &\text{w.p.}~1-p_{s}(t).
\end{cases}
\end{align*}
$X_{v}(t)$ and $X_{s}(t)$ are further extended to $5$ and $10$ dimensional sequences by adding standard Gaussian noise, respectively. We control the correlation between $X_{v}(t)$ and $Y(t)$ by a parameter $p_{v}$, keeping it constant, but change the correlation between $X_{s}(t)$ and $Y(t)$ as a function of $t$ over the interval $[0, 1]$ by a parameter $p_{s}(t)$. Hence, the training data is generated with parameter settings $(p_{s}^{-}, p_{s}^{+}, p_{v})$, where $p_{s}^{-}$ and $p_{s}^{+}$ denote the $p_{s}(t)$ setting for $t\in[0, 0.5)$ and $t\in[0.5, 1]$, respectively. As for the test data, we set $p_{s}\in\{0.999, 0.8, 0.2, 0.001\}$ and keep the same $p_{v}$. Time $t$ can be exploited as the auxiliary variable in EIIL, LfF, TIVA, ZIN, OOD-TV-Minimax-$\ell_2$, Minimax-TV-$\ell_1$, and OOD-TV-Minimax-$\ell_1$ for environment inference.

\subsection{More Results of Adversarial Learning Process}
\label{app:advlearn}

\begin{figure*}[h]
\centering
\subfloat{
\includegraphics[width=0.47\textwidth]{./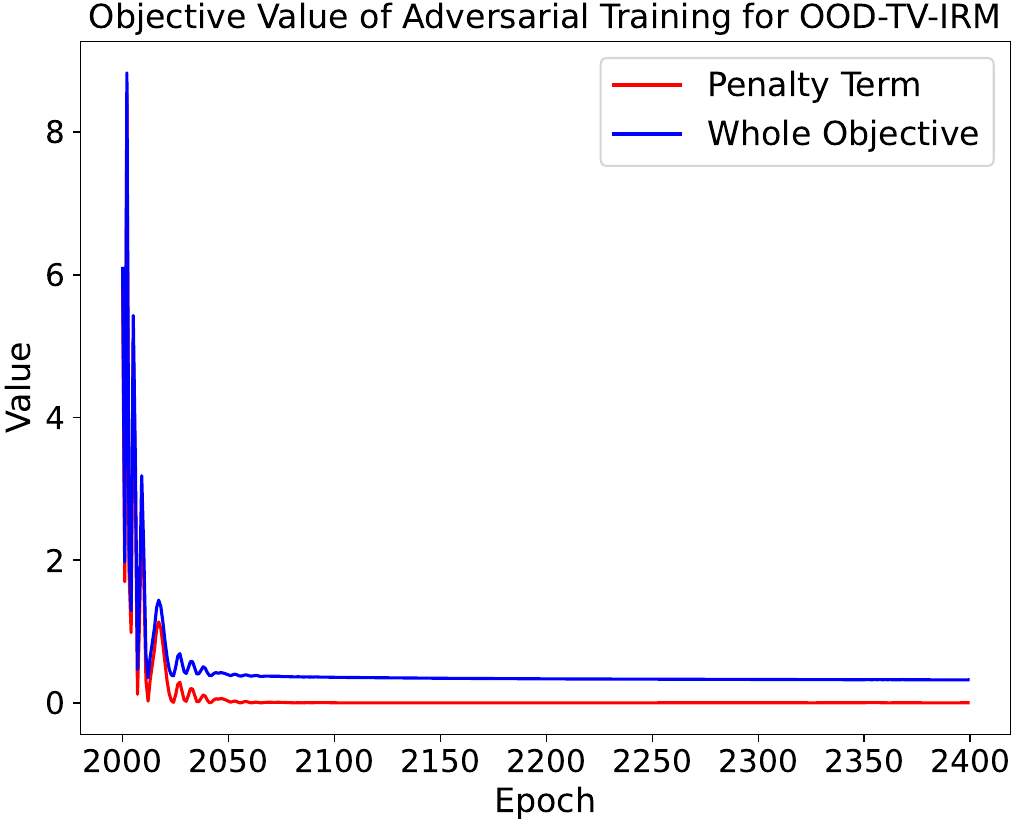}}
\hspace{4pt}\subfloat{
\includegraphics[width=0.495\linewidth]{./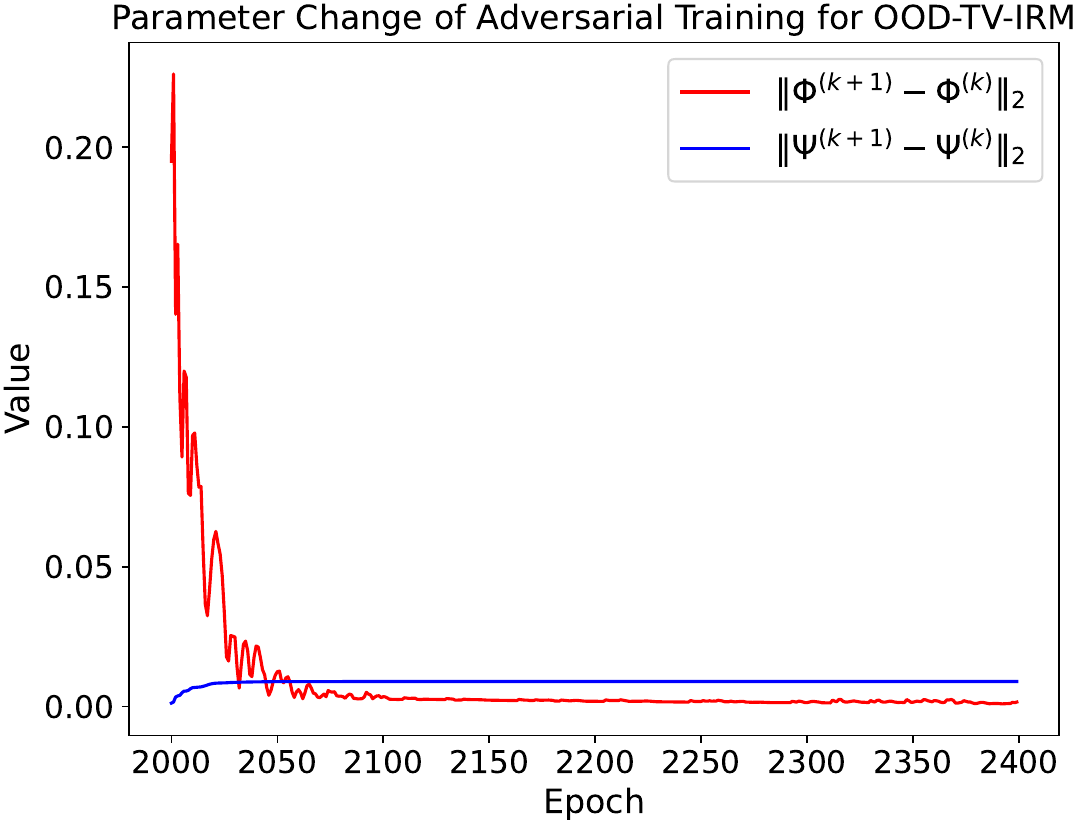}}\\
\subfloat{
\includegraphics[width=0.47\linewidth]{./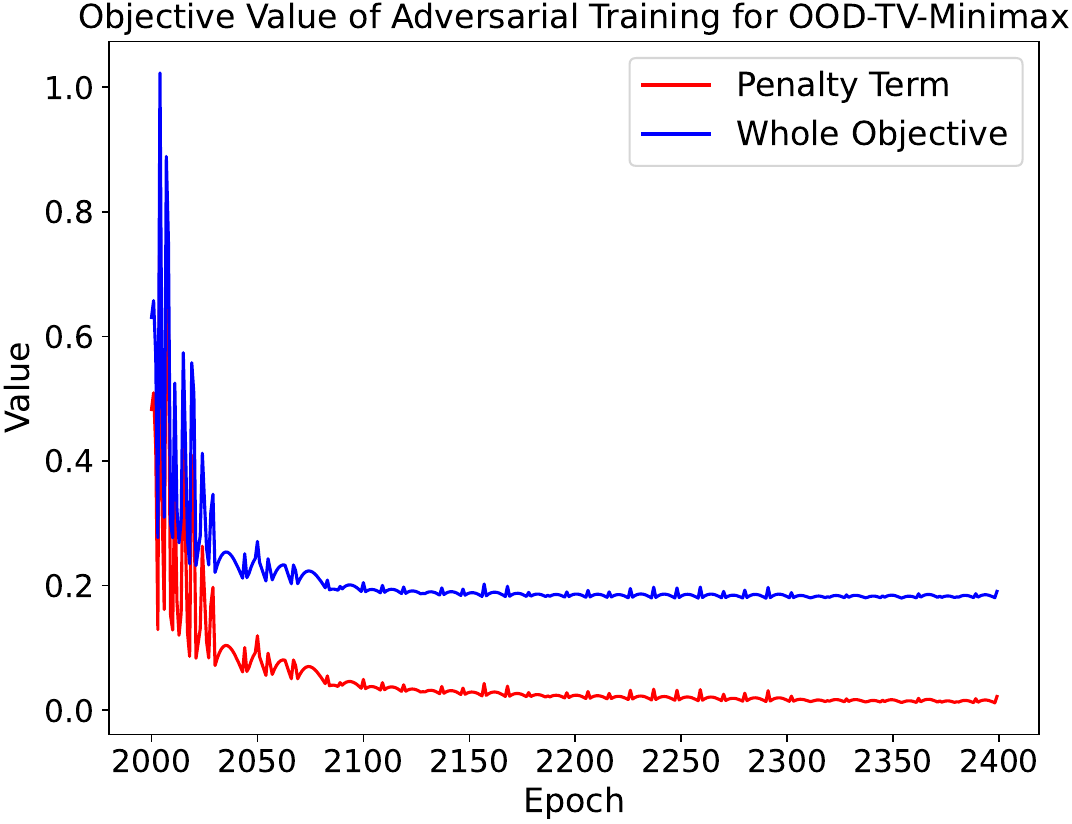}}
\subfloat{
\hspace{4pt}\includegraphics[width=0.495\linewidth]{./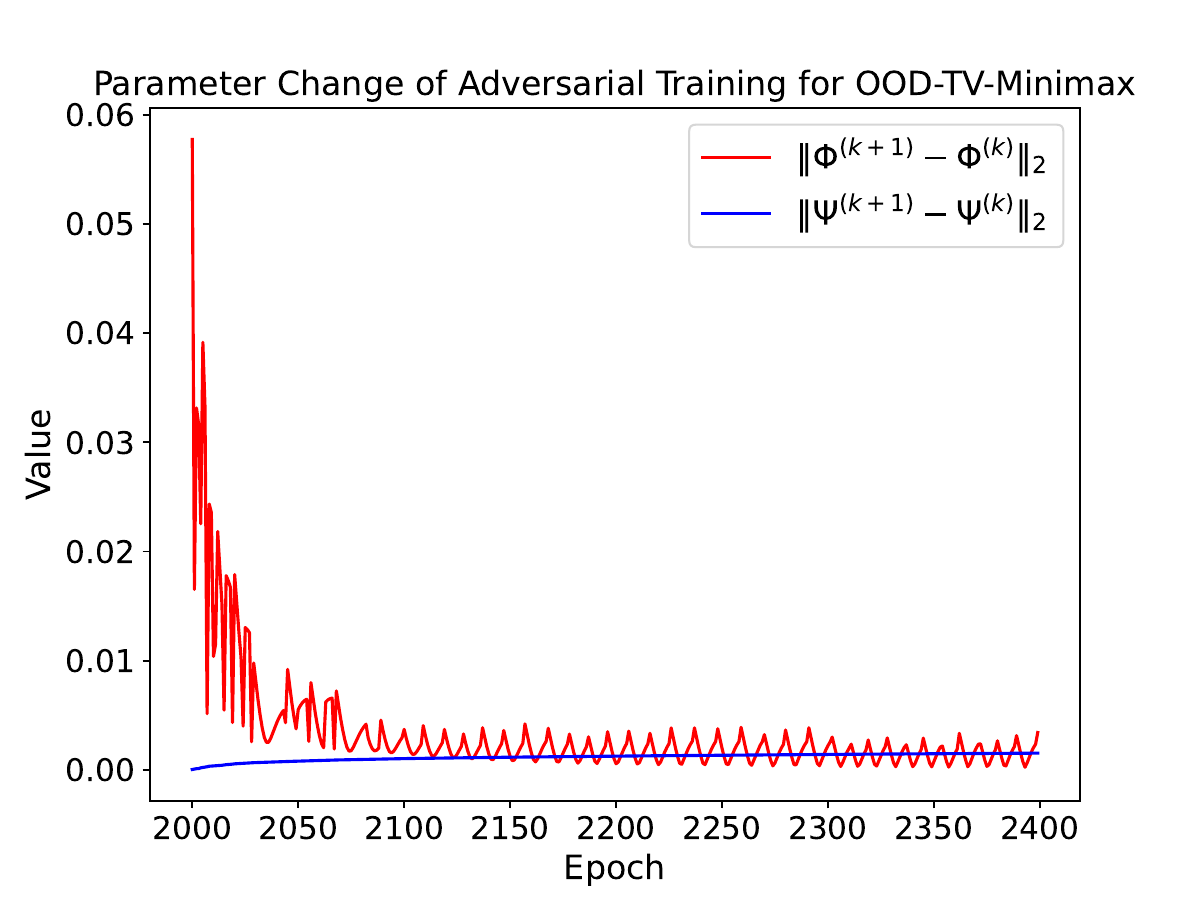}}
\caption{Objective value and parameter change of adversarial learning for OOD-TV-IRM and OOD-TV-Minimax on simulation data.}
\label{fig:advtrain}
\end{figure*}

\begin{figure*}[h]
\centering
\subfloat{
\includegraphics[width=0.44\linewidth]{./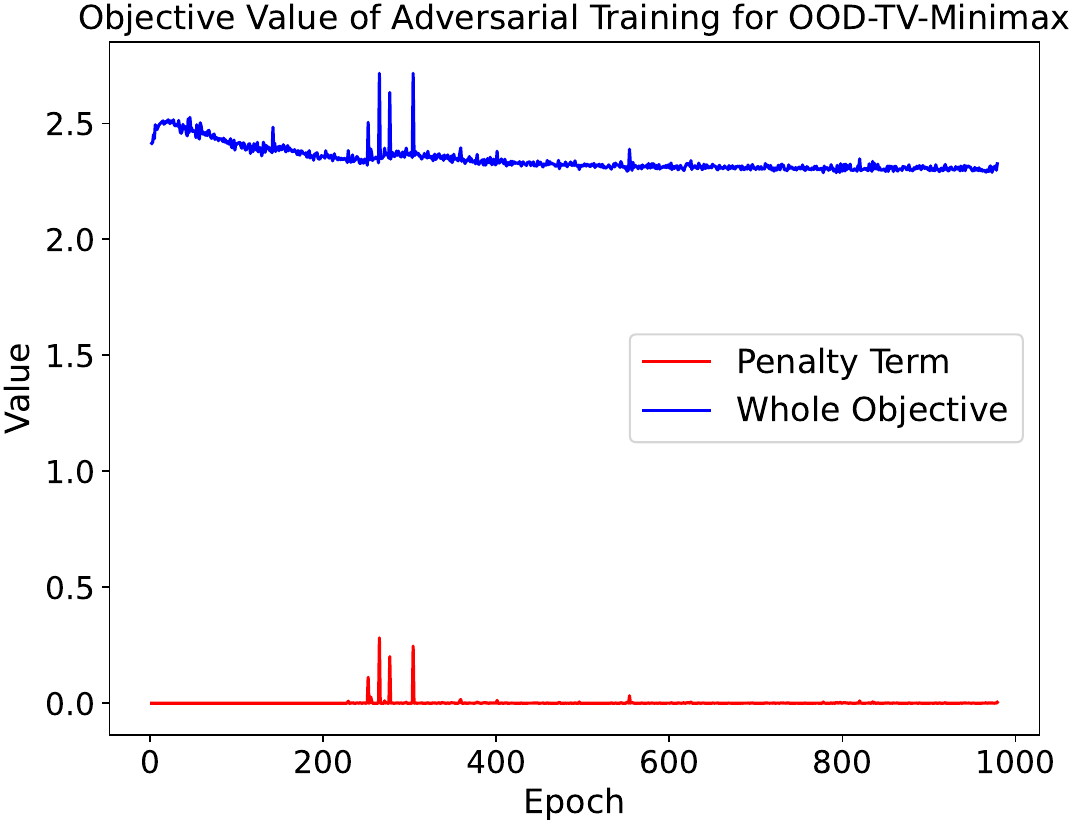}}
\subfloat{
\hspace{4pt}\includegraphics[width=0.525\linewidth]{./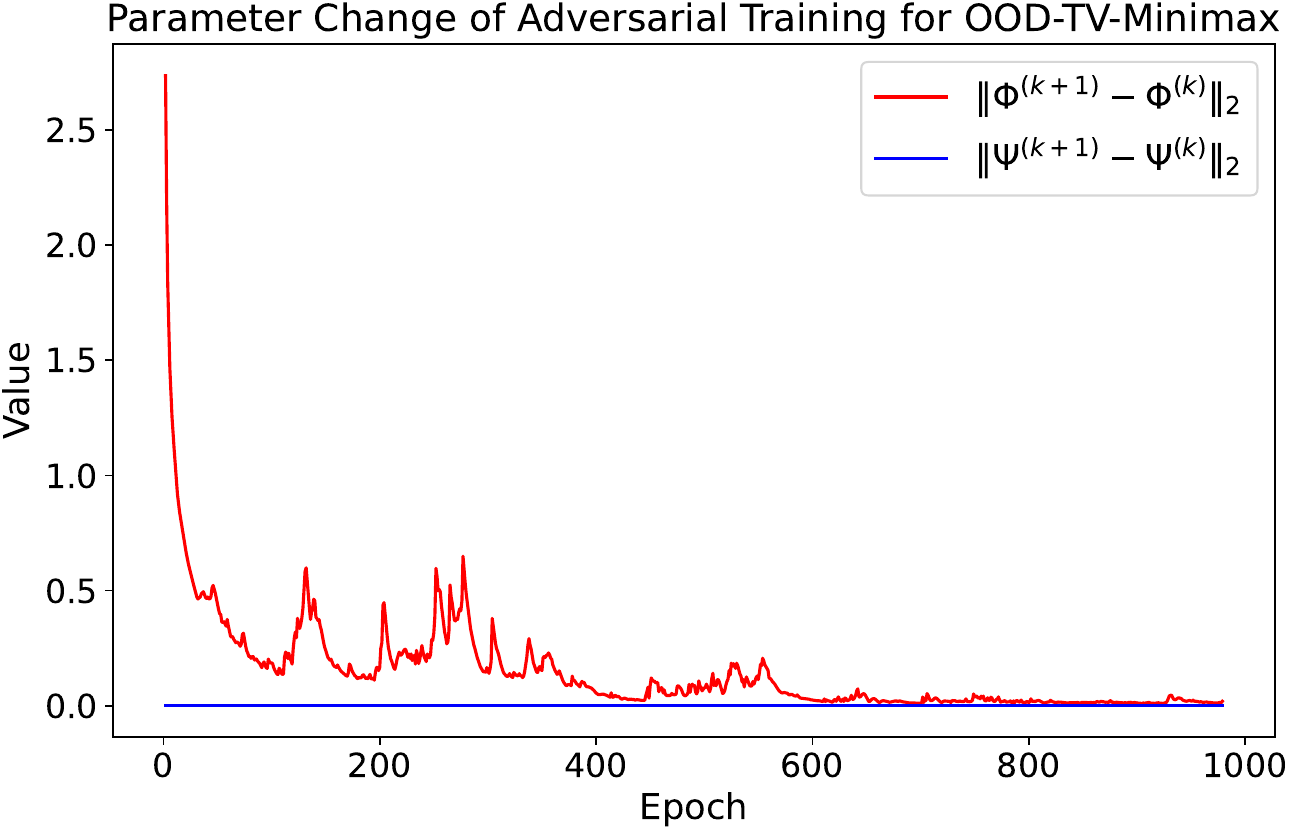}}\\
\subfloat{
\includegraphics[width=0.435\linewidth]{./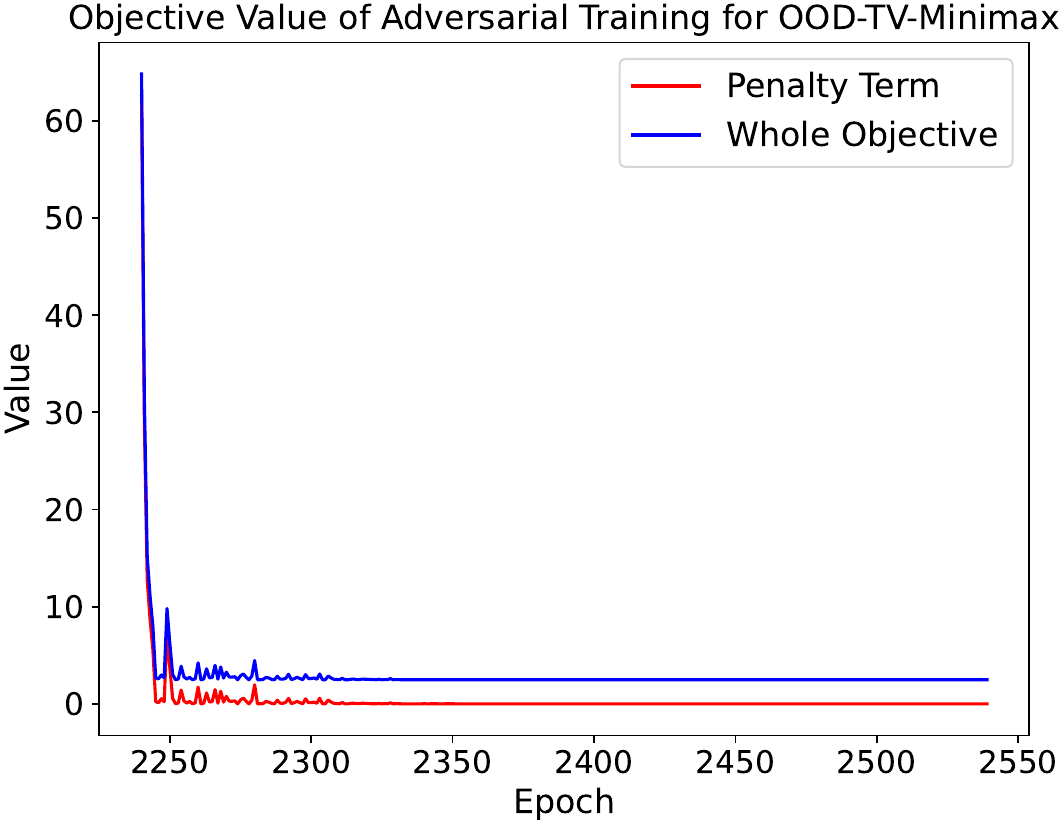}}
\subfloat{
\hspace{4pt}\includegraphics[width=0.54\linewidth]{./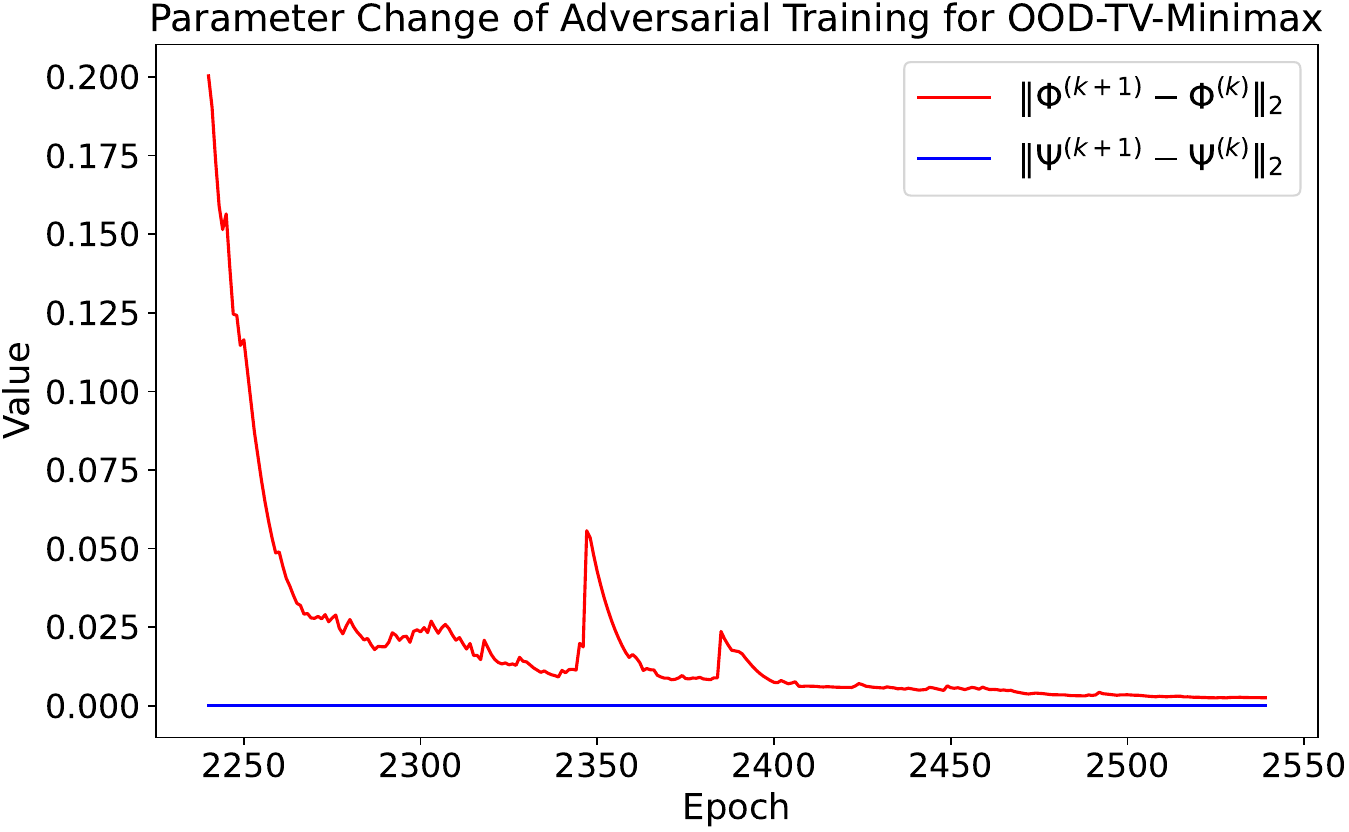}}
\caption{Objective value and parameter change of adversarial learning for OOD-TV-Minimax on the Colored MNIST data set. The top two figures correspond to OOD-TV-Minimax-$\ell_1$, while the bottom two figures correspond to OOD-TV-Minimax-$\ell_2$.}
\label{fig:advtrain2}
\end{figure*}

\end{document}